\newcommand{\be}{\begin{equation}}
\newcommand{\ee}{\end{equation}}
\newcommand{\bno}{\begin{enumerate}}
\newcommand{\eno}{\end{enumerate}}
\newcommand{\beq}{\begin{eqnarray*}}
\newcommand{\eeq}{\end{eqnarray*}}
\newcommand{\mcal}[1]{\mathcal{#1}}
\newtheorem{theorem}{Theorem}
\newtheorem{lemma}{Lemma}
\newcommand{\CL}{{\cal L}}
\newcommand{\CX}{{\cal X}}
\newcommand{\CH}{{\cal H}}
\newcommand{\bfb}{\mbox{\boldmath $b$}}
\newcommand{\bfd}{\mbox{\boldmath $d$}}
\newcommand{\bfc}{\mbox{\boldmath $c$}}
\newcommand{\bfeps}{\mbox{\boldmath $\epsilon$}}
\newcommand{\bff}{\mbox{\boldmath $f$}}
\newcommand{\bfv}{\mbox{\boldmath $v$}}
\newcommand{\bfs}{\mbox{\boldmath $s$}}
\newcommand{\bfy}{\mbox{\boldmath $y$}}
\newcommand{\bfzero}{\mbox{\boldmath $0$}}
\newcommand{\mytag}[2]{%
	\text{#1}%
	\@bsphack
	\begingroup
	\@onelevel@sanitize\@currentlabelname
	\edef\@currentlabelname{%
		\expandafter\strip@period\@currentlabelname\relax.\relax\@@@%
	}%
	\protected@write\@auxout{}{%
		\string\newlabel{#2}{%
			{#1}%
			{\thepage}%
			{\@currentlabelname}%
			{\@currentHref}{}%
		}%
	}%
	\endgroup
	\@esphack
}
\let\orgdescriptionlabel\descriptionlabel
\renewcommand*{\descriptionlabel}[1]{%
	\let\orglabel\label
	\let\label\@gobble
	\phantomsection
	\edef\@currentlabel{#1}%
	\let\label\orglabel
	\orgdescriptionlabel{#1}%
}
\begin{document}

\title{\textbf{\Large Low Rank Approximation for Smoothing Spline 
via Eigensystem Truncation}}

\author{Danqing Xu\thanks{Department of Statistics and Applied Probability, University of California, Santa Barbara, CA 93106 (email: \href{mailto:xud@pstat.ucsb.edu}{xud@pstat.ucsb.edu})}
\and 
Yuedong Wang\thanks{Department of Statistics and Applied Probability, University of California, Santa Barbara, CA 93106 (email: \href{mailto:yuedong@pstat.ucsb.edu}{yuedong@pstat.ucsb.edu}) }
}

\date{}

\maketitle	
	
\begin{abstract}
Smoothing splines provide a powerful and flexible means for 
nonparametric estimation and inference. With a cubic time 
complexity, fitting smoothing spline models to large data 
is computationally prohibitive. In this paper, we use the 
theoretical optimal eigenspace to derive a low rank 
approximation of the smoothing spline estimates. We 
develop a method to approximate the eigensystem when it 
is unknown and derive error bounds for the approximate 
estimates. The proposed methods are easy to implement with
existing software. Extensive simulations show that the new 
methods are accurate, fast, and compares favorably against 
existing methods.
\end{abstract}	
{\bf{Keywords:}} 
Low Rank Approximation; Eigensystem; Smoothing Spline; 
Reproducing Kernel Hilbert Space; Approximation Error

\section{Introduction}

As a general class of powerful and flexible modeling 
techniques, spline smoothing has attracted a great deal 
of attention and is widely used in practice. The theory 
of reproducing kernel Hilbert space (RKHS) is used to 
construct various smoothing spline models, thus providing 
a unified framework for theory, estimation, inference, 
and software implementation
\cite{wahba1990spline,gu2013smoothing,wang2011smoothing}.
Many special smoothing spline models such as polynomial, 
periodic, spherical, thin-plate, and L-spline can be 
fitted using the same code \cite{gss,assist}.
The generality and flexibility, however, does come with 
a high computational cost: time and space complexities 
of computing the smoothing spline estimate scale as 
$O(n^3)$ and $O(n^2)$ respectively, where $n$ is the 
sample size. Therefore, fitting smoothing spline models 
with large data is computationally prohibitive. 

Significant research efforts have been devoted to 
reducing the computational burden for fitting smoothing 
spline models. Several low rank approximation methods 
have been proposed in the literature. 
\citeasnoun{pseudospline} approximated the smoother 
matrix by a pseudo-eigendecomposition with orthonormal 
basis functions. \citeasnoun{kim2004smoothing} proposed 
an $O(nq^2)$ method by randomly selecting a subset of 
representers of size $q=o(n)$. Approximating the model 
space using a random subset of representers is not 
efficient since these representers are not selected 
judiciously. \citeasnoun{ma2015efficient} developed an 
adaptive sampling scheme to select subsets of 
representers according to the magnitude of the response
variable. When the roughness and magnitude of the 
underlying function do not coincide, the method in 
\citeasnoun{ma2015efficient} is not spatially adaptive 
\cite{xu2018}. \citeasnoun{wood2003thinplate} used the 
Lanczos algorithm \cite{Lanczos:1950zz} to obtain the 
truncated eigendecomposition for thin-plate splines in 
$O(Kn^2)$ operations with $K$ being the rank of the low 
rank approximation. 
%\citeasnoun{helwig2016} proposed an approximation method to fit SS ANOVA models by rounding independent variables. 

Methods in \citeasnoun{pseudospline}, 
\citeasnoun{kim2004smoothing}, 
\citeasnoun{ma2015efficient} and
\citeasnoun{wood2003thinplate} are low rank approximations.
The optimal approximation strategy 
is to utilize the rapid decaying eigenvalues and obtain 
approximation from eigendecomposition 
\cite{melkman1978,wahba1990spline}. The eigenspaces are 
optimal subspaces (minimal error subspaces) that minimize 
the Kolmogorov n-width \cite{Santin2016}. To the best of 
our knowledge, low rank approximation to the general 
smoothing spline estimate using eigenspace of the 
corresponding RKHS has not been fully studied. Low rank 
approximation for a large matrix has been studied for many 
statistical and machine learning methods including support 
vector machines \cite{fine2002efficient}, kernel principal 
component analysis \cite{zwald2006convergence}, and kernel 
ridge regression (KRR) 
\cite{williams2001,cortes2010impact,Bach13,alaoui2015fast,Yang17}.
For KRR, \citeasnoun{cortes2010impact}, \citeasnoun{Bach13} 
and \citeasnoun{alaoui2015fast} derived error bounds in 
terms of absolute difference, prediction error, and mean 
squared error, respectively. These bounds do not apply to 
smoothing spline directly where the penalty is different 
from that in a KRR. No error bounds have been derived for 
low rank approximations to smoothing spline estimates.

In this paper, we study low rank approximation to general 
smoothing spline estimates using the eigenspace. We will 
approximate the smoothing spline estimates using truncated 
eigensystem and derive error bounds for approximate 
estimates. When the eigensystem is unknown, we will 
approximate functionals applied to eigenfunctions using 
precalculate eigensystem on a set of pre-selected points, 
and derive error bounds for this further approximation. 
We note that error bounds for approximation errors are more 
useful in deciding the trade-off between approximation error 
and computation complexity than asymptotic convergence 
rate in \citeasnoun{kim2004smoothing} and 
\citeasnoun{ma2015efficient}. The proposed method can be 
easily implemented using existing software.

The rest of the paper is organized as follows. Section 
\ref{section:truncation} introduces the low rank 
approximation method and derives error bounds. 
Section \ref{section:pre-selected} presents a method 
for approximating the low rank approximation when 
eigensystem is unknown and derives error bounds for
the additional approximation. Section \ref{section:simulation} 
presents simulation results for the evaluation and 
comparison of the proposed method.

\section{Low Rank Approximation of Smoothing Spline} 
\label{section:truncation}

We review the smoothing spline model in Section 
\ref{section:ssr} and present the low rank approximation
method in Section \ref{subsection:truncation}. Error
bounds are given in Section \ref{section:EB1}.

\subsection{Smoothing spline and its computational cost}
\label{section:ssr}

Consider the general smoothing spline model
\begin{align}
y_i=\CL_i f +\epsilon_i, \quad i=1,\ldots,n, \label{eq:model}
\end{align}
where $f$ belongs to an RKHS $\CH$ on an arbitrary 
domain $\CX$, the unknown function $f$ is observed through 
a known bounded linear functional $\CL_i$, and $\epsilon_i$ 
are iid random errors with mean zero and variance $\sigma^2$. 
For the special case where observations are observed 
directly on the unknown function $f$, $\CL_if = f(x_i)$ and 
$\CL_i$ in this case is called an evaluational functional. 

Let 
$\CH=\CH_0\oplus \CH_1$ where 
$\CH_0=\text{span}\{\phi_1,\ldots,\phi_p\}$ consists 
of functions which are not penalized, and $\CH_1$ is 
an RKHS with reproducing kernel (RK) $R_1$. The smoothing 
spline estimate of the function $f$ is the minimizer of the 
penalized least squares (PLS)
\begin{align}
\sum_{i=1}^{n} \left(y_i-\CL_if\right)^2+n\lambda\Vert P_1 f \Vert^2
\label{PLS}
\end{align}
in $\CH$ where $P_1$ is the projection operator onto the 
subspace $\CH_1$. Let $\bfy=(y_1,\ldots,y_n)^\top$, 
$T=\left.\{ \CL_i\phi_{\nu}\}_{i=1}^n \right._{\nu=1}^p$,
and $\Sigma=\{\CL_i \CL_j R_1 \}_{i,j=1}^n$. 
Assume that $T$ is of full column rank. Then the PLS has a 
unique minimizer 
\cite{wahba1990spline}
\begin{equation}
\hat{f}(x)=\sum_{\nu=1}^{p}d_\nu\phi_\nu(x)+\sum_{i=1}^{n}c_i \xi_i(x),
\label{eq:fhat}
\end{equation}
where $\xi_i(x)=\CL_{i(z)} R_1(x,z)$,
$\CL_{i(z)}$ indicates that $\CL_i$ is applied to what 
follows as a function of $z$, and coefficients 
$\bfc=(c_1,\ldots,c_n)^\top$ and 
$\bfd=(d_1,\ldots,d_p)^\top$ are solutions of 
\begin{equation}  \label{eq:cd}
\begin{aligned}
T \bfd+(\Sigma +n\lambda I)\bfc&=\bfy,\\
T^\top \bfc &= \bfzero.
\end{aligned}
\end{equation}
Solving \eqref{eq:cd} takes $O(n^3)$ floating operations 
\cite{gu2013smoothing}.
Methods in \citeasnoun{kim2004smoothing} and 
\citeasnoun{ma2015efficient} approximate $\CH_1$ 
by the subspace spanned by a subset of representers 
$\{ \xi_1,\ldots,\xi_n \}$ where the subset is either 
selected randomly or adaptively. We will approximate 
$\CH_1$ by its eigenspace which is optimal under 
various circumstances \cite{Santin2016}.

\subsection{Low rank approximation via eigensystem truncation}
\label{subsection:truncation}

Assume that $\CX$ is a compact set in $\mathbb{R}^d$. When 
$R_1$ is continuous and square integrable, then there 
exists an orthonormal sequence of continuous eigenfunctions 
$\Phi_1,\Phi_2,\ldots$ in $L_2(\CX)$ and eigenvalues 
$\delta_1 \geq \delta_2 \geq \ldots \geq 0$ with 
\cite{wahba1990spline} 
\begin{align}
\int_{\CX} R_1(x,z)\Phi_k(z)dz&=\delta_k\Phi_k(x), 
\quad k=1,2,\ldots \label{eq:eigen.int}\\
R_1(x,z)&=\sum_{k=1}^{\infty}\delta_k\Phi_k(x)\Phi_k(z),\\
\int_{\CX}\int_{\CX} R_1(x,z)dxdz
&=\sum_{k=1}^{\infty}\delta_k^2<\infty. \notag
\end{align}
The eigenvalues usually decay fast. For example, the 
Sobolev space 
\begin{equation}
W_2^m[0,1]=\left\{f:~f, f^\prime,\ldots,f^{(m-1)}
\text{ are absolutely continuous, } 
~\int_0^1(f^{(m)})^2dx<\infty\right\}
\label{cubic}
\end{equation}
has eigenvalues $\delta_k\asymp k^{-2m}$ \cite{micchelli1979design}. 

We will leave the space $\CH_0$ unchanged and 
approximate $\CH_1$ by the subspace spanned by the
first $K$ eigenfunctions
$\tilde{\CH}_1=\text{span}\{\Phi_1,\ldots,\Phi_K\}$. 
$\tilde{\CH}_1$ is an RKHS with RK
$\tilde{R}_1(x,z)=\sum_{k=1}^{K}\delta_k\Phi_k(x)\Phi_k(z)$. 
The minimizer of the PLS \eqref{PLS} in the approximate space 
$\CH_K=\CH_0\oplus\tilde{\CH}_1$, 
$\tilde{f}(x)$, provides an approximation to the smoothing 
spline estimate $\hat{f}(x)$. Let 
\begin{align}
\tilde\Sigma=\{\CL_i \CL_j \tilde{R}_1 \}_{i,j=1}^n
= U_1\Delta_1U_1^\top \triangleq ZZ^\top, \label{eq:eigen.de}
\end{align}
where $U_1=\left.\{\CL_i \Phi_{k}\}_{i=1}^n \right._{k=1}^K$ 
is an $n\times K$ matrix, 
$\Delta_1=\text{diag}(\delta_1,\ldots,\delta_K)$, 
$Z=U_1 \Delta_1^{1/2}$, 
%$\Delta_1^{1/2}=\text{diag}(\sqrt{\delta_1},\ldots,\sqrt{\delta_K})$, 
and diag($\cdot$) represents a diagonal matrix. 
The approximate estimate 
\begin{equation}
\tilde{f}(x)=\sum_{\nu=1}^p \tilde{d}_\nu\phi_\nu(x)+
\sum_{i=1}^{n}\tilde{c}_i\tilde{\xi}_i(x) ,
\label{eq:approx1}
\end{equation}
where $\tilde{\xi}_i(x)=\CL_{i(z)} \tilde{R}_1(x,z)$, and
coefficients 
$\tilde{\bfc}=(\tilde{c}_1,\ldots,\tilde{c}_n)^\top$ 
and 
$\tilde{\bfd}=(\tilde{d}_1,\ldots,\tilde{d}_p)^\top$ 
are minimizers of 
\begin{align}
\Vert \bfy - T\tilde{\bfd}-\tilde{\Sigma} \tilde{\bfc}  \Vert^2 + 
n\lambda \tilde{\bfc} ^\top \tilde{\Sigma} \tilde{\bfc}. \label{eq:min.target}
\end{align}
Let $\bfb=Z^\top \tilde{\bfc}$, then equation 
\eqref{eq:min.target} reduces to 
\begin{align}
\Vert \bfy - T\tilde\bfd-Z\bfb\Vert^2+n\lambda\Vert\bfb\Vert^2.
\label{eq:min.target.approx}
\end{align}
Equation \eqref{eq:min.target.approx} is the h-likelihood 
of the linear mixed effect (LME) model 
$\bfy=T \tilde \bfd + Z \bfb + \bfeps$ where $\tilde \bfd$
is a vector of fixed effects, $\bfb$ is a vector of random 
effects, and $\bfeps=(\epsilon_1,\ldots,\epsilon_n)^\top$ 
\cite{Wang98}. Therefore existing software for fitting LME
models such as the R package {\tt nlme} may be used to compute 
minimizers $\bfb$ and $\tilde \bfd$.

\subsection{Error Bounds}
\label{section:EB1}

Let $f=f_0+f_1$ where $f_0 \in \CH_0$ and $f_1 \in \CH_1$.
Denote $\hat{f}_0(x)=\sum_{\nu=1}^{p}d_\nu\phi_\nu(x)$ and 
$\hat{f}_1(x)=\sum_{i=1}^{n}c_i \xi_i(x)$
as the estimates of $f_0$ and $f_1$ respectively, and 
$\tilde{f}_0(x)=\sum_{\nu=1}^p \tilde{d}_\nu\phi_\nu(x)$ and
$\tilde{f}_1(x)=\sum_{i=1}^{n}\tilde{c}_i\tilde{\xi}_i(x)$
as the approximations to $\hat{f}_0$ and $\hat{f}_1$ respectively.
Let 
$\hat{\bff}_0=(\hat{f}_0(x_1),\ldots,\hat{f}_0(x_n))^\top$,
$\hat{\bff}_1=(\hat{f}_1(x_1),\ldots,\hat{f}_1(x_n))^\top$,
$\hat{\bff}=(\hat{f}(x_1),\ldots,\hat{f}(x_n))^\top$,
$\tilde{\bff}_0=(\tilde{f}_0(x_1),\ldots,\tilde{f}_0(x_n))^\top$,
$\tilde{\bff}_1=(\tilde{f}_1(x_1),$ \newline
$\ldots,\tilde{f}_1(x_n))^\top$,
and
$\tilde{\bff}=(\tilde{f}(x_1),\ldots,\tilde{f}(x_n))^\top$.
Let $\Vert\cdot\Vert_{2}$, $\Vert \cdot \Vert$, and 
$\Vert \cdot \Vert_F$ denote the $L_2$, Euclidean, and
Frobenius norms respectively. Let
$T=(Q_1 ~ Q_2)(R^\top~ \bfzero)^\top$ be the QR decomposition
where $Q_1$ and $Q_2$ are $n\times p$ and $n \times (n-p)$
matrices, $Q=(Q_1 ~ Q_2)$ is an orthogonal matrix, and $R$
is a $p \times p$ upper triangular and invertible matrix. 
%Denote $lambda$ as the selected smoothing parameter.
%Denote $\lambda$ and $\tilde{\lambda}$ as the minimizers 
%of a smoothing parameter selection criterion such as 
%the generalized cross-validation (GCV) and generalized 
%maximum likelihood (GML) under model spaces $\CH$ and 
%$\CH_K$ respectively. 

\begin{theorem} \label{bound:truncation}
Assume that $\{\phi_1,\ldots,\phi_p\}$ is a set of 
orthonormal basis for $\CH_0$, and 
$|\CL_i \Phi_{k}| \le \kappa$ for all $i=1,\ldots,n$
and $k=1,2,\ldots$. Then
\begin{align*}
\Vert \tilde{f}_0-\hat{f}_0 \Vert_{2}^2 
&\leq \zeta_2 \Vert\tilde{\Sigma}-\Sigma \Vert_F^2,\\
\Vert \tilde{f}_1-\hat{f}_1 \Vert_{2}^2 
&\leq \zeta_3 \Vert\tilde{\Sigma}-\Sigma \Vert_F^2+
n\kappa^2\Vert \bm c\Vert^2D_K,\\
\Vert \tilde{f}-\hat{f} \Vert_{2}^2
&\leq 2(\zeta_2+\zeta_3)\Vert\tilde{\Sigma}-\Sigma\Vert_F^2+
2n\kappa^2\Vert \bm c\Vert^2D_K, 
\end{align*}
where 
$\zeta_1=\Vert Q_2\Vert_F^6\Vert Q_2^\top\bm y\Vert^2$, 
$\zeta_2=2\lambda_\text{max}(A) (\zeta_1 B \tilde{B}
\Vert\tilde{\Sigma}\Vert_F^2+\Vert {\bm c}\Vert^2)$, 
$\zeta_3=n\kappa^2 C_K\zeta_1B \tilde{B}$, 
$A=T(T^\top T)^{-2}T^\top$, 
$\lambda_\text{max}(A)$ is the largest eigenvalue of $A$,
$B=\sum_{k=1}^{n-p}\lambda_{k,n}^{-2}$,
$\tilde{B}=\sum_{k=1}^{n-p}\tilde\lambda_{k,n}^{-2}$, 
$\tilde\lambda_{k,n}$ and $\lambda_{k,n}$ are eigenvalues of $Q_2^\top\tilde{\Sigma}Q_2$ and $Q_2^\top\Sigma Q_2$ respectively, 
$C_K=\sum_{k=1}^K\delta_k^2$,
and $D_K=\sum_{k=K+1}^\infty\delta_k^2$. 
\end{theorem}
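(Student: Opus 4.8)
The three displayed inequalities are linked: since $\tilde f-\hat f=(\tilde f_0-\hat f_0)+(\tilde f_1-\hat f_1)$, the last one follows from the first two together with $\Vert a+b\Vert_2^2\le 2\Vert a\Vert_2^2+2\Vert b\Vert_2^2$, so the plan is to prove only the first two and reduce each to a bound on a finite‑dimensional coefficient difference. First I would use the $L_2$‑orthonormality of $\{\phi_\nu\}$ to write $\Vert\tilde f_0-\hat f_0\Vert_2^2=\Vert\tilde{\bfd}-\bfd\Vert^2$, and expand the representers in \eqref{eq:fhat} through the eigensystem: $\xi_i(x)=\CL_{i(z)}R_1(x,z)=\sum_{k\ge1}\delta_k(\CL_i\Phi_k)\Phi_k(x)$ and $\tilde\xi_i(x)=\sum_{k=1}^K\delta_k(\CL_i\Phi_k)\Phi_k(x)$ (the interchange of $\CL_i$ with the kernel expansion is legitimate because $\CL_i$ is bounded on $\CH_1$). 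Then $\hat f_1$ and $\tilde f_1$ have $L_2$‑expansions with coefficients $\delta_k\sum_i c_i\CL_i\Phi_k$ on $\{1,2,\dots\}$ and $\delta_k^{1/2}b_k$ on $\{1,\dots,K\}$, where $\bfb=Z^\top\tilde{\bfc}$; note $\tilde f_1$ depends on $\tilde{\bfc}$ only through $\bfb$, which is uniquely determined although $\tilde{\bfc}$ need not be. By Parseval the two expansions overlap only on $\{1,\dots,K\}$, hence $\Vert\tilde f_1-\hat f_1\Vert_2^2=\Vert\Delta_1^{1/2}(Z^\top\bfc-\bfb)\Vert^2+\sum_{k>K}\delta_k^2\big(\sum_i c_i\CL_i\Phi_k\big)^2$, and Cauchy--Schwarz with $|\CL_i\Phi_k|\le\kappa$ bounds the tail by $n\kappa^2\Vert\bfc\Vert^2D_K$. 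It remains to control $\tilde{\bfd}-\bfd$ and $Z^\top\bfc-\bfb$ by $\Vert\tilde\Sigma-\Sigma\Vert_F$.

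Next I would obtain closed forms via $T=Q_1R$. Substituting $T^\top\bfc=\bfzero$ into \eqref{eq:cd} and projecting onto $Q_1,Q_2$ gives $\bfc=Q_2(Q_2^\top\Sigma Q_2+n\lambda I)^{-1}Q_2^\top\bfy$ and $\bfd=R^{-1}Q_1^\top(\bfy-\Sigma\bfc)$; the normal equations of the ridge/LME problem \eqref{eq:min.target.approx} give $\tilde{\bfd}=R^{-1}Q_1^\top(\bfy-Z\bfb)$ and, after the push‑through identity $Z^\top(ZZ^\top+n\lambda I)^{-1}=(Z^\top Z+n\lambda I)^{-1}Z^\top$, the key fact that $Z^\top\bfc$ and $\bfb$ carry the \emph{same} resolvent structure: with $V=Q_2^\top Z$, $Z^\top\bfc=V^\top(Q_2^\top\Sigma Q_2+n\lambda I)^{-1}Q_2^\top\bfy$ and $\bfb=V^\top(Q_2^\top\tilde\Sigma Q_2+n\lambda I)^{-1}Q_2^\top\bfy$. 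Consequently $\tilde{\bfd}-\bfd=R^{-1}Q_1^\top(\Sigma\bfc-Z\bfb)$, so $\Vert\tilde f_0-\hat f_0\Vert_2^2=(\Sigma\bfc-Z\bfb)^\top A(\Sigma\bfc-Z\bfb)\le\lambda_{\text{max}}(A)\Vert\Sigma\bfc-Z\bfb\Vert^2$ using $Q_1(RR^\top)^{-1}Q_1^\top=T(T^\top T)^{-2}T^\top=A$; and the resolvent identity $X^{-1}-Y^{-1}=X^{-1}(Y-X)Y^{-1}$ with $X=Q_2^\top\Sigma Q_2+n\lambda I$ and $Y=Q_2^\top\tilde\Sigma Q_2+n\lambda I$ yields $Z^\top\bfc-\bfb=V^\top X^{-1}Q_2^\top(\tilde\Sigma-\Sigma)Q_2\,Y^{-1}Q_2^\top\bfy$, exhibiting the factor $\tilde\Sigma-\Sigma$ explicitly.

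From here it is submultiplicativity bookkeeping. Using $\Vert X^{-1}\Vert_F^2=\sum_k(\lambda_{k,n}+n\lambda)^{-2}\le B$ and $\Vert Y^{-1}\Vert_F^2\le\tilde B$, together with $\Vert V^\top\Vert_F\le\Vert Q_2\Vert_F\Vert Z\Vert_2\le\Vert Q_2\Vert_F\Vert\tilde\Sigma\Vert_F^{1/2}$ and $\Vert Q_2^\top(\tilde\Sigma-\Sigma)Q_2\Vert_F\le\Vert Q_2\Vert_F^2\Vert\tilde\Sigma-\Sigma\Vert_F$, the resolvent expression gives $\Vert Z^\top\bfc-\bfb\Vert^2\le\zeta_1B\tilde B\Vert\tilde\Sigma\Vert_F\Vert\tilde\Sigma-\Sigma\Vert_F^2$. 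Combining with $\Vert(\Sigma-\tilde\Sigma)\bfc\Vert^2\le\Vert\bfc\Vert^2\Vert\tilde\Sigma-\Sigma\Vert_F^2$, $\Vert Z\Vert_2^2=\Vert\tilde\Sigma\Vert_2\le\Vert\tilde\Sigma\Vert_F$, and $\Sigma\bfc-Z\bfb=(\Sigma-\tilde\Sigma)\bfc+Z(Z^\top\bfc-\bfb)$ produces the first inequality with the stated $\zeta_2$. For the second, rather than extracting $\delta_1$ from $\Vert\Delta_1^{1/2}(Z^\top\bfc-\bfb)\Vert^2$ I would keep $\Delta_1^{1/2}V^\top=\Delta_1U_1^\top Q_2$ intact (here $Z=U_1\Delta_1^{1/2}$) and bound it entrywise by $|\CL_i\Phi_k|\le\kappa$ and the unit‑norm columns of $Q_2$, giving $\Vert\Delta_1U_1^\top Q_2\Vert_F^2\le\Vert Q_2\Vert_F^2\,n\kappa^2C_K$; inserting this (with $B,\tilde B$ as above) into the resolvent expression yields exactly $\zeta_3\Vert\tilde\Sigma-\Sigma\Vert_F^2$, and adding the truncation tail completes the proof.

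I expect the main obstacle to be the step that makes the extraction of $\tilde\Sigma-\Sigma$ clean: one must rewrite the approximate fit so that $Z^\top\bfc$ and $\bfb$ have \emph{identical} resolvent structure, since only then does the resolvent identity isolate $Q_2^\top(\tilde\Sigma-\Sigma)Q_2$ rather than a less tractable difference — and the rank‑$K$ factorization $\tilde\Sigma=ZZ^\top$ together with the push‑through identity is precisely what enables this. A secondary but genuine source of care is deciding at each multiplication whether to use the spectral or the Frobenius norm so that the constants assemble into $\zeta_1,\zeta_2,\zeta_3$ as stated, and handling the non‑uniqueness of $\tilde{\bfc}$ by working throughout with $\bfb=Z^\top\tilde{\bfc}$.
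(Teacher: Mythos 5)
Your proposal is correct and follows essentially the same route as the paper's proof: closed-form solutions obtained through the QR decomposition of $T$, the resolvent identity to isolate $\tilde{\Sigma}-\Sigma$, Frobenius-norm submultiplicativity to assemble the factor $\zeta_1 B\tilde{B}$, and a Parseval/Cauchy--Schwarz argument in the eigenbasis with $|\CL_i\Phi_k|\le\kappa$ for the $\tilde{f}_1-\hat{f}_1$ term. The only real difference is that the paper bounds $\Vert\tilde{\bm c}-\bm c\Vert$ directly via the explicit formula $\tilde{\bm c}=Q_2\bigl(Q_2^\top(\tilde{\Sigma}+n\lambda I)Q_2\bigr)^{-1}Q_2^\top\bm y$, whereas you work with $\bm b=Z^\top\tilde{\bm c}$ through the push-through identity --- a slightly tidier handling of the possible non-uniqueness of $\tilde{\bm c}$ that lands on the same constants $\zeta_2$ and $\zeta_3$.
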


Proof of Theorem \ref{bound:truncation} is given in 
\ref{appendix:theorem1}.

\noindent \textbf{Remarks:} 
\begin{enumerate}
 \item We are interested in the approximation error 
to spline fit with a given dataset. With fixed $\bfy$, orthonormal 
basis of $\CH_0$, eigenfunctions and eigenvalues of 
$\CH_1$, and rank $K$, all terms in the upper bounds can be 
calculated for control of approximation error. 
 \item Since $\delta_k$ is square summable, all terms 
involving $D_K$ can be made arbitrarily small with large 
enough $K$.
 \item Terms involving 
$\Vert\tilde{\Sigma}-\Sigma\Vert_F^2$ can be made 
arbitrarily small with large enough $K$ for common
situations. For example, when 
$\sum_{k=1}^{\infty}\delta_k<\infty$ which is true for
the Sobolev space $W_2^m[0,1]$, since 
$\Vert\tilde{\Sigma}-\Sigma\Vert_F^2
\leq \sum_{i=1}^{n}\sum_{j=1}^{n}
(\sum_{k=K+1}^{\infty}\delta_k\kappa^2)^2
=n^2\kappa^4 (\sum_{k=K+1}^{\infty}\delta_k )^2$,
then $\Vert\tilde{\Sigma}-\Sigma\Vert_F^2$ can be made 
arbitrarily small with large enough $K$. Another example 
is the situation when $\CL_i f=f(x_i)$ and design points 
$x_i$'s are roughly equally spaced, we have 
$\frac{1}{n}\sum_{i=1}^{n}\Phi_k(x_i)\Phi_l(x_i)
\simeq\int\Phi_k(z)\Phi_l(z)dz=\delta_{k,l}$ 
where $\delta_{k,l}$ is the kronecker delta function
\cite{wahba1990spline}. Then 
\begin{eqnarray*}
&&\Vert\tilde{\Sigma}-\Sigma\Vert_F^2 \notag \\
%&=&\sum_{i=1}^{n}\sum_{j=1}^{n}
%\left(\sum_{k=K+1}^{\infty}\delta_k^2\Phi_k^2(x_i)\Phi_k^2(x_j)+
%2 \sum_{k=K+1}^\infty \sum_{l=k+1}^\infty
%\delta_k\Phi_k(x_i)\Phi_k(x_j)\delta_l\Phi_l(x_i)\Phi_l(x_j)\right)
%\notag\\
&=&\sum_{i=1}^{n}\sum_{j=1}^{n} 
\sum_{k=K+1}^{\infty}\delta_k^2\Phi_k^2(x_i)\Phi_k^2(x_j)+
2 \sum_{k=K+1}^\infty \sum_{l=k+1}^\infty 
\delta_k \delta_l \sum_{i=1}^{n}\Phi_k(x_i)\Phi_l(x_i)
\sum_{j=1}^{n}\Phi_k(x_j)\Phi_l(x_j) \notag\\
&\simeq& \sum_{k=K+1}^{\infty}\delta_k^2\sum_{i=1}^{n}\Phi_k^2(x_i)\sum_{j=1}^{n}\Phi_k^2(x_j)\notag\\
&\leq& n^2\kappa^4D_K . \notag
\end{eqnarray*}
\end{enumerate}

\section{Low Rank Approximation When the Eigensystem is Unknown} 
\label{section:pre-selected}

\subsection{Approximation to low rank approximation}

When eigenfunctions and eigenvalues are known, we can 
compute $U_1$ and $\Delta_1$ easily without needing to 
perform a matrix eigendecomposition in \eqref{eq:eigen.de}. 
Eigenfunctions and eigenvalues are known for periodic, 
spherical, and trigonometric splines \cite{wahba1990spline}. 
\citeasnoun{amini2012} provide an approximate eigensystem 
for linear spline. Except for special cases, eigenfunctions 
and eigenvalues are in general unknown. We want to avoid 
the direct eigendecomposition of $\Sigma$ since it requires 
$O(n^3)$ computations. The idea behind our approach is to 
approximate eigenfunctions and eigenvalues on a set of 
pre-selected points and save them. We then can approximate 
eigenfunctions at any new $x$ values.

Let $S_N=\{s_1,\ldots,s_N\} \subset  \CX$ be $N$ 
pre-selected points. The discrete version of equation 
\eqref{eq:eigen.int} based on pre-selected $N$ points 
\begin{align}
\frac{1}{N}\sum_{j=1}^{N}R_1(x,s_j)\Phi_k(s_j)\approx\delta_k\Phi_k(x), 
\quad k=1,2,\ldots, \label{eq:discrete}
\end{align}
can be used as an interpolation formula in estimation of 
the eigenfunctions \cite{delves1988computational}. 
Let $\Omega=\{R_1(s_i,s_j)\}_{i,j=1}^N$ and  
$\Omega = V\Gamma V^\top $ be the eigendecomposition where 
$V=(\bfv_1, \ldots, \bfv_N)$ and 
$\Gamma=\text{diag}(\gamma_1,\ldots,\gamma_N)$. 
The approximation in \eqref{eq:discrete} implies that 
$\Omega \Phi_k(\bfs)\approx N \delta_k\Phi_k(\bfs)$ where 
$\bfs=(s_1,\ldots,s_N)^\top$ and
$\Phi_k(\bfs)=\left(\Phi_k(s_1),\ldots,\Phi_k(s_N)\right)^\top$. 
Columns of $V$ and $\gamma_k$'s provide approximations of 
eigenfunctions and eigenvalues: 
$\Phi_k(s_j)\approx\sqrt{N}v_{jk}$ where $v_{jk}$ is the 
$j$th element of $\bfv_k$ and $\delta_k \approx N^{-1} \gamma_k$ \cite{Girolami:2002:OSD:638929.638938}. 
Then $\Phi_k(x) \approx \sqrt{N} \gamma_k^{-1} R_1(x,\bfs) \bfv_k 
\triangleq \check{\Phi}_k(x)$ where $\check{\Phi}_k(x)$ is the
approximate eigenfunction.
For any $\CL_i$, from \eqref{eq:discrete} we have 
$\CL_i \Phi_k \approx \sqrt{N} 
\sum_{j=1}^N \CL_{i(x)} R_1(x,s_j)v_{jk} / \gamma_k
=\sqrt{N} R_{1i}(\bfs)\bfv_k /\gamma_k$ 
where 
$R_{1i}(\bfs)=(\CL_{i(x)} R_1(x,s_1),\ldots, \CL_{i(x)}R_1(x,s_N))$.
Using the first $K \le N$ eigen-vectors and eigen-values only, 
we approximate the RK
\begin{equation}
R_1(x,z) 
\approx \sum_{k=1}^{K}\delta_k \Phi_k(x) \Phi_k(z)
 \approx \sum_{k=1}^{K}\gamma_k^{-1} R_1(x,\bfs)\bfv_k 
 R_1(z,\bfs)\bfv_k\triangleq \check{R}_1(x,z),
\end{equation}
where $R_1(x,\bfs)=(R_1(x,s_1),\ldots,R_1(x,s_N))$.
Then $\Sigma$ is approximated by 
\begin{equation}
\Sigma\approx \{ \CL_{i(x)} \CL_{j(z)} \check{R}_1\}_{i,j=1}^n
\triangleq\check{\Sigma}
=U_2 V_1\Gamma_1^{-1}V_1^\top U_2^\top
=\check{Z}\check{Z}^\top,
\label{checkSigma}
\end{equation}
where $U_2=(R_{11}^\top(\bfs),\ldots,R_{1n}^\top(\bfs))^\top$, 
$V_1=(\bfv_1, \ldots, \bfv_K)$,
$\Gamma_1=\text{diag}(\gamma_1,\ldots,\gamma_K)$, and 
$\check{Z}=U_2V_1\Gamma_1^{-1/2}$. 
The approximate estimate 
\begin{equation}
\check{f}(x)=\sum_{\nu=1}^p \check{d}_\nu\phi_\nu(x)+
\sum_{i=1}^{n}\check{c}_i\check{\xi}_i(x) ,
\label{eq:approx2}
\end{equation}
where $\check{\xi}_i(x)=\CL_{i(z)} \check{R}_1(x,z)$, and
coefficients 
$\check{\bfc}=(\check{c}_1,\ldots,\check{c}_n)^\top$ and 
$\check{\bfd}=(\check d_1,\ldots,\check{d}_p)^\top$ are 
minimizers of \eqref{eq:min.target} with $\tilde{\Sigma}$ 
being replaced by $\check{\Sigma}$.
Again, setting $\check{\bfb}=\check{Z}^\top \check{\bfc}$, 
we solve the minimization problem \eqref{eq:min.target.approx} 
with $Z$ 
being replaced by $\check{Z}$ to obtain the minimizers 
$\check{\bfd}=(\check d_1,\ldots,\check{d}_p)^\top$ and 
$\check{\bfb}=(\check{b}_1,\ldots,\check{b}_K)^\top$. 
Using the fact that 
$\check{\xi}_i(x)=\CL_{i(z)} \check{R}_1(x,z)=
\sum_{k=1}^K \gamma_k^{-1} R_1(x,\bfs) \bfv_k \CL_{i(z)} 
R_1(z,\bfs) \bfv_k$,
the estimate of the function at any point $x$ can be 
calculated as follows:
\begin{align}
\check{f}(x)
&=\sum_{\nu=1}^{p}\check{d}_\nu\phi_\nu(x)+
\sum_{i=1}^{n}\check{c}_i\check{\xi}_i(x)
=\sum_{\nu=1}^{p}\check{d}_\nu\phi_\nu(x)+
\sum_{k=1}^{K}\gamma_k^{-1}R_1(x,\bfs)\bfv_kV_1^\top 
R_1^\top(\bfs)\check{\bfc}\notag \\
&=\sum_{\nu=1}^{p}\check{d}_\nu\phi_\nu(x)+
R_1(x,\bfs)V_1\Gamma_1^{-1/2}\check{Z}^\top\check{\bfc}
=\sum_{\nu=1}^{p}\check{d}_\nu\phi_\nu(x)+
R_1(x,\bfs)V_1\Gamma_1^{-1/2}\check{\bfb}. 
\label{eq:estimate.eigen}
\end{align}

The eigenvectors $V$ and eigenvalues $\gamma_k$'s are 
pre-calculated and stored, thus the proposed approach 
only needs $O(nNK)$ in time to generate the 
approximate truncated eigendecomposition. The 
computation complexity for calculating LME model 
estimate is in the order $O(n(p+K)^2+K^4)$: one time 
matrix calculation (QR decomposition) of order 
$O(n(p+K)^2)$ and Newton-Ralphson iterations of 
order $O(K^4)$ \cite{lindstrom1988}.

\subsection{Error Bounds}

We now derive upper bounds for the approximation 
errors and discuss the impact of rank 
$K$ and the number of pre-selected points $N$. The 
approximation error $\Vert \check{f}-\hat{f}\Vert_{2}^2$ 
is bounded by two approximation errors, 
$\Vert \check{f}-\hat{f}\Vert_{2}^2\leq 
2 \Vert\check{f}-\tilde{f}\Vert_{2}^2+
2\Vert \tilde{f}-\hat{f}\Vert_{2}^2$,
where $\Vert \tilde{f}-\hat{f}\Vert_{2}^2$ represents 
the approximation error due to truncation of the 
eigenfunction sequence and 
$\Vert\check{f}-\tilde{f}\Vert_{2}^2$ represents the 
approximation error due to the approximation of 
the truncated eigenspace. The upper bounds of the 
approximation errors due to truncation are given in
Theorem \ref{bound:truncation}.
% Denote $\check{\lambda}$ 
%as the minimizer of the same smoothing parameter selection 
%criterion under model spaces 
%$\check{\CH}=\CH_0 \oplus \check{\CH}_1$
%where $\check{\CH}_1$ is the RKHS with RK $\check{R}_1$.
The follow theorem
provides upper bounds for the approximation errors 
due to the approximation of the truncated eigenspace.

\begin{theorem} \label{bound:eigen}
Assume that $\{\phi_1,\ldots,\phi_p\}$ is a set of 
orthonormal basis for $\CH_0$, and 
$|\CL_i \Phi_{k}| \le \kappa$ and 
$|\CL_i \check{\Phi}_{k}| \le \kappa'$
for all $i=1,\ldots,n$ and $k=1,2,\ldots$. Then
\begin{align*}
\Vert \check{f}_0-\tilde{f}_0 \Vert_{2}^2 
&\leq\zeta_2'\Vert \check{\Sigma}-\tilde{\Sigma}\Vert_F^2,\\
\Vert \check{f}_1-\tilde{f}_1 \Vert_{2}^2 
&\leq 2\zeta_4 \sum_{k=1}^K
\left \Vert \check{\Phi}_k-\Phi_k \right \Vert_{2}^2
+6\Vert\check{\bm c}\Vert^2\sum_{k=1}^K
\left[\check{\delta}_k^2\sum_{i=1}^n
(\CL_i \check{\Phi}_k-\CL_i \Phi_k)^2\right]\notag\\
&\quad+6n\kappa^2 \Vert \check{\bm c}\Vert^2\sum_{k=1}^K
(\check{\delta}_k-\delta_k)^2+
6\zeta_3'\Vert \check{\Sigma}-\tilde{\Sigma}\Vert_F^2,\\	
\Vert\check{f}-\tilde{f}\Vert_{2}^2
&\leq 4\zeta_4 \sum_{k=1}^K
\left \Vert \check{\Phi}_k-\Phi_k \right \Vert_{2}^2
+12 \Vert\check{\bm c}\Vert^2\sum_{k=1}^K
\left[\check{\delta}_k^2\sum_{i=1}^n(\CL_i \check{\Phi}_k-\CL_i \Phi_k)^2\right]\notag\\
&\quad +12n\kappa^2 \Vert \check{\bm c}\Vert^2\sum_{k=1}^K
(\check{\delta}_k-\delta_k)^2+
(12\zeta_3'+2\zeta_2')\Vert \check{\Sigma}-\tilde{\Sigma}\Vert_F^2,\notag\\ 
\end{align*}
where 
$\zeta_2'=2\lambda_\text{max}(A)(\zeta_1 \tilde{B} \check{B}
\Vert\tilde{\Sigma}\Vert_F^2+\Vert \tilde{\bm c}\Vert^2)$, 
$\zeta_3'=n\kappa^2 C_K\zeta_1 \tilde{B} \check{B}$, 
$\zeta_4=\Vert \check{\bm c}\Vert^2 n \kappa'^2C'_K$,
$\check{B}=\sum_{k=1}^{n-p}\check\lambda_{k,n}^{-2}$, 
$\check \lambda_{k,n}$ are eigenvalues of 
$Q_2^\top\check{\Sigma}Q_2$, and
$C_K'=\sum_{k=1}^K\check\delta_k^2$.
\end{theorem}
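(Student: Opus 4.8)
The plan is to mirror the architecture of the proof of Theorem~\ref{bound:truncation}, but now comparing $\check f$ to $\tilde f$ rather than $\tilde f$ to $\hat f$. The key observation is that both estimates solve problems of the same structural form \eqref{eq:min.target}: $\tilde f$ is built from $\tilde\Sigma=ZZ^\top$ with $Z=U_1\Delta_1^{1/2}$, while $\check f$ is built from $\check\Sigma=\check Z\check Z^\top$ with $\check Z=U_2V_1\Gamma_1^{-1/2}$. So the first step is to express $\tilde{\bfd}-\check{\bfd}$ and $\tilde{\bfc}-\check{\bfc}$ via the normal equations for \eqref{eq:min.target}, using the QR decomposition $T=(Q_1~Q_2)(R^\top~\bfzero)^\top$ to split each estimate into a $\CH_0$-part and a $\CH_1$-part. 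Projecting onto $Q_2$ removes the fixed-effect contribution and yields, for each estimate, an expression of the form $Q_2^\top\tilde{\bff}_1 = (I - n\lambda\,Q_2(Q_2^\top\tilde\Sigma Q_2 + n\lambda I)^{-1}Q_2^\top)\,Q_2^\top\bfy$ and similarly for $\check\Sigma$; the difference is then controlled by a resolvent identity $(Q_2^\top\tilde\Sigma Q_2 + n\lambda I)^{-1} - (Q_2^\top\check\Sigma Q_2 + n\lambda I)^{-1}$, whose operator norm is bounded by $\|Q_2^\top(\tilde\Sigma-\check\Sigma)Q_2\|/(n\lambda)^2 \le \tilde B\check B\,\|\tilde\Sigma-\check\Sigma\|_F$ after inserting the eigenvalue sums $\tilde B,\check B$. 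This is exactly the mechanism that produced $\zeta_2,\zeta_3$ in Theorem~\ref{bound:truncation}, with $B$ replaced by $\check B$ and $\hat f$-quantities replaced by $\tilde f$-quantities; it explains the appearance of $\zeta_2',\zeta_3'$.

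For the $\CH_0$-component, once $\tilde{\bfd}-\check{\bfd}$ is written in terms of $R^{-1}$, $Q_1$, and the resolvent difference, the bound $\|\tilde f_0-\check f_0\|_2^2 = \|\tilde{\bfd}-\check{\bfd}\|^2 \le \zeta_2'\|\check\Sigma-\tilde\Sigma\|_F^2$ follows by the same norm estimates used before — here I would reuse $A=T(T^\top T)^{-2}T^\top$ and its largest eigenvalue, and the identity $\|\tilde f_0-\check f_0\|_2^2=\|\tilde{\bfd}-\check{\bfd}\|^2$ that comes from orthonormality of $\{\phi_\nu\}$. The $\CH_1$-component is the substantive part. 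I would write $\check f_1(x)-\tilde f_1(x)$ as a sum of a "coefficient-difference" term and a "kernel-difference" term, then expand the kernel difference: $\check\xi_i-\tilde\xi_i = \sum_{k=1}^K(\check\delta_k\CL_i\check\Phi_k\,\check\Phi_k(x) - \delta_k\CL_i\Phi_k\,\Phi_k(x))$. The trick is to telescope this single difference into three pieces — one replacing $\check\Phi_k(x)$ by $\Phi_k(x)$, one replacing $\CL_i\check\Phi_k$ by $\CL_i\Phi_k$, one replacing $\check\delta_k$ by $\delta_k$ — which after squaring and applying the inequality $(a+b+c)^2\le 3(a^2+b^2+c^2)$ produces the three explicit sums in the statement: $\sum_k\|\check\Phi_k-\Phi_k\|_2^2$ (weighted by $\zeta_4=\|\check{\bfc}\|^2 n\kappa'^2 C_K'$, where $\kappa'$ bounds $|\CL_i\check\Phi_k|$ and $C_K'$ collects $\check\delta_k^2$), $\sum_k\check\delta_k^2\sum_i(\CL_i\check\Phi_k-\CL_i\Phi_k)^2$, and $n\kappa^2\|\check{\bfc}\|^2\sum_k(\check\delta_k-\delta_k)^2$. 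The coefficient-difference term is absorbed into the $\zeta_3'\|\check\Sigma-\tilde\Sigma\|_F^2$ contribution via the resolvent bound above.

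The main obstacle is bookkeeping the telescoping in the $\CH_1$-part so that the constants come out exactly as $2\zeta_4$, $6$, $6n\kappa^2$, $6\zeta_3'$ — in particular, keeping straight which norm ($L_2$ on $\CX$ versus Euclidean on sampled points versus Frobenius on $n\times n$ matrices) attaches to which factor, and ensuring that when $\|\check\Phi_k\|_2$ or $\|\Phi_k\|_2$ appears it is controlled (the $\Phi_k$ are $L_2$-orthonormal, so $\|\Phi_k\|_2=1$, but $\|\check\Phi_k\|_2$ is not exactly $1$ and must be bounded, e.g.\ by $1+\|\check\Phi_k-\Phi_k\|_2$, contributing to the $\zeta_4$ term). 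A secondary subtlety is that $\check\Sigma-\tilde\Sigma$ appears both directly (through the resolvent-based $\zeta_3',\zeta_2'$ terms) and implicitly (since $\|\check\Sigma-\tilde\Sigma\|_F$ can itself be expanded in terms of the eigenfunction/eigenvalue discrepancies); I would keep it as an opaque quantity $\|\check\Sigma-\tilde\Sigma\|_F^2$ in the final bound, consistent with the statement. The last step is to combine: $\|\check f-\tilde f\|_2^2\le 2\|\check f_0-\tilde f_0\|_2^2+2\|\check f_1-\tilde f_1\|_2^2$ gives the third inequality from the first two, doubling each constant.
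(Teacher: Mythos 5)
Your proposal follows essentially the same route as the paper: the coefficient differences $\Vert\check{\bm c}-\tilde{\bm c}\Vert^2$ and $\Vert\check{\bm d}-\tilde{\bm d}\Vert^2$ are controlled by rerunning the resolvent/QR argument of Lemma~\ref{bound:coef.tilde} with $(\Sigma,\tilde\Sigma)$ replaced by $(\tilde\Sigma,\check\Sigma)$ (giving $\zeta_2',\zeta_3'$ with $B$ replaced by $\check B$), the $\CH_0$ part reduces to $\Vert\check{\bm d}-\tilde{\bm d}\Vert^2$ by orthonormality, and the $\CH_1$ part is handled by writing $\check f_1=\sum_k\check a_k\check\Phi_k$, $\tilde f_1=\sum_k\tilde a_k\Phi_k$ and telescoping through the four substitutions $\check\Phi_k(x)\to\Phi_k(x)$, $\CL_i\check\Phi_k\to\CL_i\Phi_k$, $\check\delta_k\to\delta_k$, $\check c_i\to\tilde c_i$, exactly as in the paper. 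You also correctly identify the one real subtlety: the $\check\Phi_k$ are not orthonormal, so the substitution $\check\Phi_k(x)\to\Phi_k(x)$ must be peeled off first (the paper's term $I$, bounded by Cauchy--Schwarz and $|\CL_i\check\Phi_k|\le\kappa'$, giving $\zeta_4$), so that the remaining coefficient discrepancy $\sum_k(\check a_k-\tilde a_k)\Phi_k$ collapses to $\sum_k(\check a_k-\tilde a_k)^2$ by exact orthonormality.

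One bookkeeping detail in your grouping needs to be flipped to recover the stated constants: you place the basis-function substitution $\check\Phi_k(x)\to\Phi_k(x)$ inside the three-way telescoping and the coefficient difference $\check c\to\tilde c$ in the outer two-way split, which yields $6\zeta_4+2\zeta_3'$ rather than the stated $2\zeta_4+6\zeta_3'$. The paper does the opposite: the outer split $\Vert a+b\Vert^2\le 2\Vert a\Vert^2+2\Vert b\Vert^2$ separates the $\check\Phi_k(x)\to\Phi_k(x)$ piece (factor $2\zeta_4$), and the three-way $(a+b+c)^2\le 3(a^2+b^2+c^2)$ is applied to $\check a_k-\tilde a_k$, which contains the $\check c\to\tilde c$ piece (factor $6\zeta_3'$). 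This is a trivial regrouping, not a flaw in the method, but as literally described your version proves a permuted bound rather than the one in the statement.
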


Proof of \ref{bound:eigen} is given in \ref{appendix:theorem2}.
The theory of the numerical solution of eigen value problems 
(\citeasnoun{baker1977numerical}, Theorem 3.4 and 3.5) 
shows that if the eigenfunctions $\Phi_k$'s are continuous over 
a compact interval $C_r=[r_1,r_2]$ for $k=1,2,\ldots$, 
$N^{-1}\gamma_k$ and $\check{\Phi}_k$ will converge to 
the true eigenvalue $\delta_k$ and the true eigenfunction $\Phi_k$ 
respectively in the uniform norm:
$\lim_{N\rightarrow\infty}\sup_{\{x\in C_r\}} 
\vert\check{\Phi}_k(x)-\Phi_k(x)\vert=0$,
given $S_N$ is dense enough in the domain. 
Consequently 
$\Vert \check{\Sigma}-\tilde{\Sigma} \Vert_F^2=
\sum_{i=1}^{n}\sum_{k=1}^{n} 
(\sum_{k=1}^{K} (\delta_k \CL_i \Phi_k \CL_j \Phi_k(x_j)-
\check\delta_k\CL_i \check\Phi_k \CL_j\check\Phi_k(x_j))^2$
can be arbitrarily small with large enough $N$. The trade-off between the approximation quality and 
computational time are controlled by both $K$ and $N$.

\section{Simulation Studies} \label{section:simulation}

The cubic spline is one of the most useful smoothing spline 
models. In this section, we explore the performance of our 
low rank approximation method for fitting cubic spline 
models and compare them with existing methods. 

We consider model \eqref{eq:model} with $\CL_if=f(x_i)$ and
three cases of $f$:
$f(x)=\frac{6}{10}\beta_{30,17}(x)+\frac{4}{10}\beta_{3,11}(x)$
(Case 1), 
$f(x)=\frac{1}{3}\beta_{20,5}(x)+\frac{1}{3}\beta_{12,12}(x)+
\frac{1}{3}\beta_{7,30}(x)$ (Case 2), 
and $f(x)=\sin (32\pi x)-8(x-.5)^2$ (Case 3),
where 
$\beta_{p,q}(x)=\frac{\Gamma(p+q)}{\Gamma(p)\Gamma(q)}x^{p-1}(1-x)^{q-1}$. 
Cases 1 and 2 have 2 and 3 bumps respectively. The function 
in Case 3 has periodic oscillations. 
Cases 1, 2, and 3 reflect an increasingly complex ``truth''.
We set $n=10000$, $x_i=i/n$ for 
$i=1,\ldots,n$, and consider two standard deviations of 
random errors: $\sigma=0.1$ and $\sigma=0.2$. 

We fit the cubic spline with model space 
$\CH=W_2^2[0,1]$ and penalty 
$\Vert P_1f\Vert^2=\int_0^1(f^{\prime\prime})^2dx$. 
$W_2^2[0,1]=\CH_0 \oplus \CH_1$ where 
$\CH_0=\text{span}\{ 1, k_1(x) \}$ and $\CH_1$ is an RKHS
with RK $R_1(x,z)=k_2(x)k_2(z)-k_4(|x-z|)$,
$k_r(x)=B_r(x)/r!$, and $B_r$ for $r=0,1,\ldots$
are defined recursively by 
$B_0(x)=1$, $B_r^\prime(x)=r B_{r-1}(x)$, and 
$\int_0^1 B_r(x)dx=0$. 
The fits with the exact RK $R_1$ and a randomly selected 
subset of representers as in \citeasnoun{kim2004smoothing}
are denoted as ALL and RSR respectively.

For our low rank approximation method referred to as EIGEN, we 
compute and save eigensystem of $R_1$ evaluated at grid 
points $S_N=\{s_j=j/N\}_{j=1}^{N}$ with $N=100$. 
Simulation results with $N=1000$ (not shown) are similar. 
We consider five choices of the rank
$K=10,20,30,40,50$. For comparison, 
we apply the Nystr\"{o}m method to derive an
approximation to $\Sigma$. Specifically, let $C$ be an
$n \times K$ matrix formed by $K$ randomly selected 
columns from $n$ columns in $\Sigma$, and $W$ be the 
intersection of the selected rows and columns of $\Sigma$.
Then the Nystr\"{o}m approximation of $\Sigma$ is 
$CW^{-1}C^\top$. Since the running time complexity of 
eigendecomposition on $W$ is $O(K^3)$ and matrix 
multiplication with $C$ takes $O(nK^2)$, the total 
complexity of the Nystr\"{o}m method is $O(K^3+nK^2)$. 
Again, we consider five choices of
the rank $K=10,20,30,40,50$.
We compute ALL and RSR fits using the R functions 
\texttt{ssr} in the \texttt{assist} package \cite{assist}
and \texttt{ssanova} in the \texttt{gss} package 
respectively \cite{gss}.   
For the EIGEN and Nystr\"{o}m methods, the spline 
estimates are 
calculated by the R function \texttt{lme} in the 
\texttt{nlme} package \cite{nlme}.
The smoothing parameter $\lambda$ is selected by the 
GML method \cite{wang2011smoothing}.

For each simulation setting, the experiment is replicated 
for 100 times. 
Table \ref{mse_gml} lists the average MSEs, squared biases, 
and variances for all methods. As expected, the MSEs of 
the EIGEN method are getting closer to those of the exact 
cubic spline estimates as $K$ increases. It indicates that 
the EIGEN method with a large enough $K$ can fully recover 
the exact cubic spline estimate. The EIGEN approach performs 
well and can have smaller MSEs than the ALL and RSR method
with an appropriate choice of $K$. For the EIGEN and Nystr\"{o}m 
methods, bias decreases while variance increases as $K$ 
increases. A good trade-off between bias and variance depends
on the complexity of the true function and standard deviation of the 
random error. For simple functions such as Case 1, the MSE
is dominated by the variance; thus a small $K$ is needed 
to achieve small MSE. For complex functions such as
Case 3, a large $K$ is needed since the MSE is dominated by 
the bias. The EIGEN method has smaller MSEs than the 
Nystr\"{o}m method and needs a smaller $K$ to achieve the same 
level of accuracy. 

%The smoothing parameter $\lambda$ has been used as a tuning parameter to control the bias-variance trade-off, and the truncation parameter $K$ also plays an important role in such trade-off. The EIGEN approach with a larger truncation parameter $K$ usually reduces bias with sacrifice in the variance to achieve smaller MSE. 

% Table 1
\begin{table}
\centering
\caption{average mean squared error (MSE), squared bias 
($\text{Bias}^2$) and variance (Var) (in $10^{-4}$) 
when $n=10000$. ``E'' and ``N'' in abbreviations E10-E50 
and N10-N50 represent EIGEN and Nystr\"{o}m methods 
respectively, while the numbers represent $K$.}
	\label{mse_gml}
	\begin{tabular}{crrrrrrrrr}
		\toprule
		\multirow{2}{*}{Method}&
		\multicolumn{3}{c}{Case 1} &
		\multicolumn{3}{c}{Case 2} &
		\multicolumn{3}{c}{Case 3} \\
		\cmidrule(l{2pt}r{2pt}){2-4}\cmidrule(l{2pt}r{2pt}){5-7}\cmidrule(l{2pt}r{2pt}){8-10}
		& $\text{Bias}^2$ & {Var}& {MSE}& $\text{Bias}^2$ & {Var}& {MSE} & $\text{Bias}^2$ & {Var}& {MSE} \\
		\midrule
		$\sigma=0.1$   &  &  &  & & & &  & &  \\
		ALL & 0.011 & 0.465 & 0.476 & 0.004 & 0.475 & 0.479 & 0.018 & 1.914 & 1.932 \\  
		RSR & 0.022 & 0.365 & 0.387 & 0.005 & 0.387 & 0.391 & 243.116 & 160.124 & 403.240 \\  
		E50 & 0.011 & 0.419 & 0.431 & 0.004 & 0.429 & 0.433 & 0.015 & 0.503 & 0.518 \\ 		 
		E40 & 0.016 & 0.366 & 0.381 & 0.003 & 0.385 & 0.389 & 0.014 & 0.401 & 0.415 \\ 		 
		E30 & 0.044 & 0.295 & 0.339 & 0.003 & 0.314 & 0.316 & 4700.832 & 0.291 & 4701.123 \\ 		 
		E20 & 0.254 & 0.209 & 0.462 & 0.038 & 0.216 & 0.254 & 4796.596 & 0.197 & 4796.793 \\ 		 
		E10 & 13.473 & 0.103 & 13.577 & 56.540 & 0.115 & 56.655 & 4892.651 & 0.107 & 4892.758 \\ 		 
		N50 & 0.038 & 0.426 & 0.463 & 0.005 & 0.430 & 0.435 & 227.180 & 781.722 & 1008.902 \\ 		 
		N40 & 0.057 & 0.520 & 0.577 & 0.011 & 0.646 & 0.658 & 810.058 & 1136.019 & 1946.077 \\ 		 
		N30 & 0.164 & 1.332 & 1.496 & 0.085 & 2.844 & 2.929 & 2044.044 & 1074.665 & 3118.709 \\ 		 
		N20 & 0.611 & 4.880 & 5.491 & 2.663 & 17.775 & 20.438 & 3863.852 & 459.434 & 4323.286 \\ 		 
		N10 & 49.968 & 156.727 & 206.695 & 75.129 & 96.968 & 172.098 & 4825.268 & 44.884 & 4870.152 \\
		\midrule
		$\sigma=0.2$   &  &  &  & & & &  & &  \\
		ALL & 0.042 & 1.500 & 1.542 & 0.023 & 1.419 & 1.441 & 0.058 & 5.782 & 5.840\\
		RSR & 0.047 & 1.398 & 1.445 & 0.022 & 1.348 & 1.370 & 248.838 & 173.654 & 422.492 \\ 
		E50 & 0.041 & 1.457 & 1.498 & 0.022 & 1.385 & 1.406 & 0.031 & 2.106 & 2.137 \\ 
		E40 & 0.042 & 1.385 & 1.428 & 0.021 & 1.323 & 1.344 & 0.022 & 1.707 & 1.729 \\ 		 
		E30 & 0.065 & 1.203 & 1.268 & 0.019 & 1.162 & 1.181 & 4701.181 & 1.242 & 4702.424 \\ 		 
		E20 & 0.269 & 0.887 & 1.156 & 0.049 & 0.844 & 0.893 & 4796.749 & 0.856 & 4797.604 \\ 		 
		E10 & 13.479 & 0.492 & 13.971 & 56.547 & 0.468 & 57.015 & 4892.715 & 0.460 & 4893.175 \\ 		 
		N50 & 0.070 & 1.370 & 1.440 & 0.023 & 1.294 & 1.317 & 224.996 & 785.140 & 1010.136 \\ 		 
		N40 & 0.090 & 1.374 & 1.464 & 0.030 & 1.514 & 1.543 & 812.801 & 1138.041 & 1950.841 \\ 		 
		N30 & 0.179 & 2.089 & 2.267 & 0.094 & 2.599 & 2.693 & 2154.415 & 1052.081 & 3206.496 \\ 		 
		N20 & 0.914 & 8.236 & 9.150 & 1.763 & 15.985 & 17.748 & 3908.169 & 439.851 & 4348.021 \\  
		N10 & 47.832 & 196.206 & 244.039 & 72.846 & 103.378 & 176.223 & 4838.702 & 39.928 & 4878.630 \\
		\bottomrule
	\end{tabular}
\end{table}

All simulations were run on an HP ProLiant DL380 G9 with dual 
Xeon 10 core processors and 128GB of RAM.
Table \ref{time} lists CPU times in seconds per replication 
for all methods. We set $n=10000$ in the above simulation 
such that comparisons can be made with the exact cubic spline 
estimates. To compare computational costs at different sample 
sizes, Figure \ref{fig:time_elapsed} shows CPU times with
$n=500,1000,2500,5000,10000$ (left) and 
$n$ ranged from 20000 to 100000 incremented by 10000 (right). 
It shows that the computational advantage of the EIGEN and 
Nystr\"{o}m methods over existing methods is even more profound 
with larger sample sizes. Figure \ref{fig:time_eigen}
shows that the CPU times of the EIGEN methods increase with 
$n$ linearly.

% Table 3
\begin{table}
	\centering
	\caption{system time elapsed in seconds when $n=10000$.}
	\label{time}
	\begin{tabular}{crrrrrr}
		\toprule
		\multirow{2}{*}{Method}&
		\multicolumn{3}{c}{$\sigma=0.1$} &
		\multicolumn{3}{c}{$\sigma=0.2$} \\
		\cmidrule(l{2pt}r{2pt}){2-4}\cmidrule(l{2pt}r{2pt}){5-7}
		& Case 1 & Case 2& Case 3& Case 1& Case 2& Case 3 \\
		\midrule
		ALL & 206.823 & 216.051 & 233.349 & 199.757 & 208.752 & 233.950 \\ 		 
		RSR & 3.303 & 3.346 & 3.635 & 3.111 & 3.170 & 3.625 \\ 		 
		E50 & 0.959 & 0.967 & 1.025 & 0.963 & 0.951 & 1.037 \\ 		 
		E40 & 0.685 & 0.693 & 0.750 & 0.688 & 0.690 & 0.758 \\ 		 
		E30 & 0.455 & 0.463 & 0.518 & 0.463 & 0.473 & 0.526 \\ 		 
		E20 & 0.279 & 0.282 & 0.305 & 0.284 & 0.283 & 0.310 \\ 		 
		E10 & 0.147 & 0.144 & 0.155 & 0.148 & 0.144 & 0.153 \\ 		 
		N50 & 0.955 & 0.949 & 0.989 & 0.952 & 0.952 & 0.969 \\ 		 
		N40 & 0.689 & 0.683 & 0.688 & 0.679 & 0.679 & 0.684 \\ 		 
		N30 & 0.453 & 0.442 & 0.451 & 0.451 & 0.450 & 0.452 \\ 		 
		N20 & 0.269 & 0.265 & 0.269 & 0.268 & 0.267 & 0.269 \\ 		 
		N10 & 0.132 & 0.128 & 0.132 & 0.132 & 0.132 & 0.131 \\
		\bottomrule
	\end{tabular}
\end{table}

\begin{figure}[htb!]
\includegraphics[width=\textwidth]{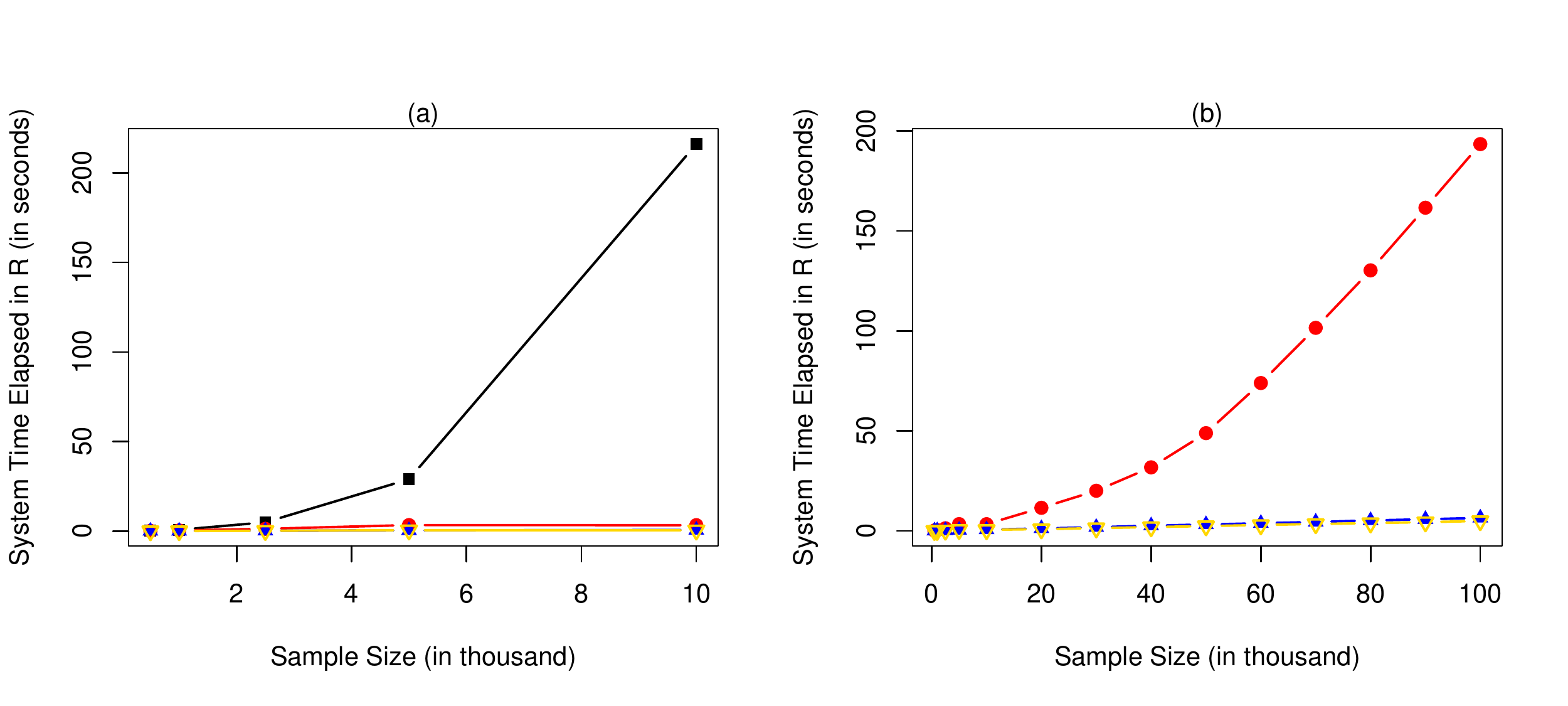}
\caption{System time elapsed in seconds for 
fitting Case 3 with $\sigma=0.1$: (a) ALL (black filled square), 
RSR (red solid circle), EIGEN with $K=30$ (blue solid triangle),
and Nystr\"{o}m with $K=30$ (gold triangle point down) with 
sample sizes $n=500,1000,2500,5000,10000$; 
(b) RSR (red solid circle), EIGEN with $K=30$ (blue solid 
triangle) and Nystr\"{o}m with $K=30$ (gold triangle point down) 
with sample sizes from 20000 to 100000 incremented by 10000.}
\label{fig:time_elapsed}
\centering
\end{figure}

\begin{figure}[htb!]
\centering
\includegraphics[width=0.7\textwidth]{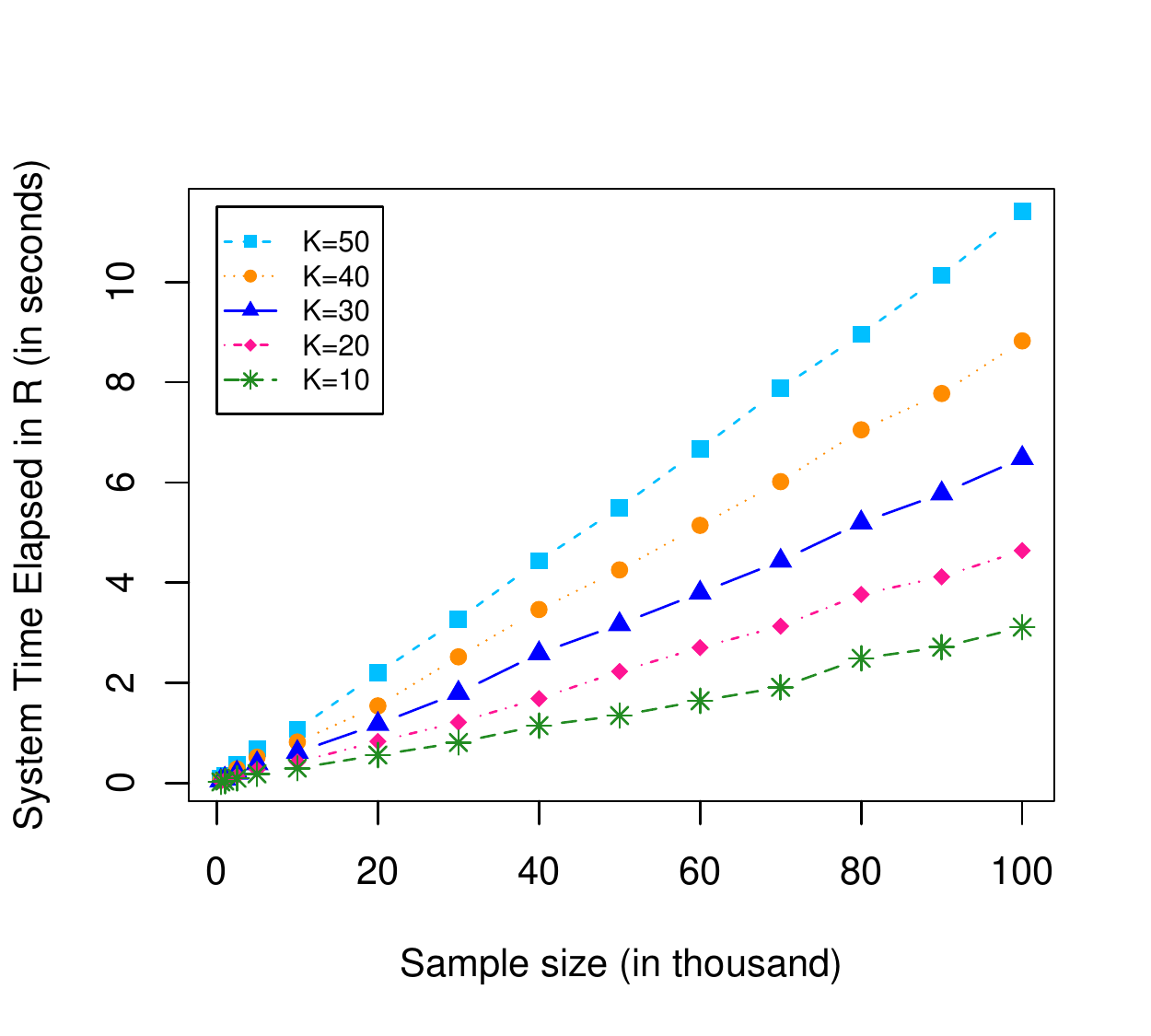}
\caption{System time elapsed in seconds for fitting Case 3 
with $\sigma=0.1$ by the EIGEN approach with $5$ different 
truncation parameters $K$.}
\label{fig:time_eigen}
\end{figure}

\newpage
\noindent {\large\bf Acknowledgements}

This research was supported by a grant from the National 
Science Foundation (DMS-1507620). The authors acknowledge 
support from the Center for Scientific Computing from the 
CNSI, MRL: an NSF MRSEC (DMR1121053).

\bibliographystyle{dcu}
\citationstyle{dcu}  
%\clearpage
%\bibliographystyle{unsrt}
\bibliography{lrspline}

\newcommand{\Appendix}{\appendix\def\thesection{Appendix~\Alph{section}}
\def\thesubsection{\Alph{section}.\arabic{subsection}}
}

\begin{appendix}
\Appendix    % This makes the section title start with Appendix!

\renewcommand{\theequation}{A.\arabic{equation}}
\renewcommand{\thesubsection}{A.\arabic{subsection}}
\setcounter{equation}{0}

\section{Proof of Theorem \ref{bound:truncation}}
\label{appendix:theorem1}

\begin{lemma}\label{bound:coef.tilde}
The approximation errors of $\tilde{\bm c}$ and 
$\tilde{\bm d}$ in terms of Euclidean norm are upper bounded as
\begin{align}
\Vert \tilde{\bm c} - \bm c\Vert^2 
&\leq \zeta_1 B \tilde{B} \Vert\tilde{\Sigma}-\Sigma \Vert_F^2,\\
\Vert \tilde{\bm d} - \bm d\Vert^2 
&\leq \zeta_2 \Vert \tilde{\Sigma}-\Sigma\Vert_F^2.
\end{align}
\end{lemma}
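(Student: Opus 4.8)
\medskip
\noindent\textbf{Proof proposal.} The plan is to eliminate the unpenalized part of both estimates using the QR decomposition $T=Q_1R$, thereby writing $\bm{c},\tilde{\bm{c}}$ (and hence $\bm{d},\tilde{\bm{d}}$) as solutions of two explicit, equally sized linear systems that differ only in a $\Sigma$ versus $\tilde\Sigma$ block, and then to compare those solutions through a resolvent identity.

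The first and most delicate step is to normalize the approximate estimate so that it solves the analogue of \eqref{eq:cd}. The minimizer of \eqref{eq:min.target} determines $\tilde{\bm{c}}$ only modulo $\ker\tilde\Sigma$, and \eqref{eq:min.target} is a priori a penalized least squares rather than the linear system \eqref{eq:cd}. From the stationarity conditions of \eqref{eq:min.target}, and replacing $\tilde{\bm{c}}$ by $(n\lambda)^{-1}\bigl(\bm{y}-T\tilde{\bm{d}}-\tilde\Sigma\tilde{\bm{c}}\bigr)$, I would verify that this representative leaves the fit $\tilde f$ (which depends on $\tilde{\bm{c}}$ only through $\tilde\Sigma\tilde{\bm{c}}$ and $\tilde{\bm{c}}^\top\tilde\Sigma\tilde{\bm{c}}$) unchanged and satisfies $T\tilde{\bm{d}}+(\tilde\Sigma+n\lambda I)\tilde{\bm{c}}=\bm{y}$ together with $T^\top\tilde{\bm{c}}=\bfzero$, i.e.\ exactly \eqref{eq:cd} with $\Sigma$ replaced by $\tilde\Sigma$. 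The constraints $T^\top\bm{c}=T^\top\tilde{\bm{c}}=\bfzero$ force $Q_1^\top\bm{c}=Q_1^\top\tilde{\bm{c}}=\bfzero$, so $\bm{c}=Q_2\bm{a}$ and $\tilde{\bm{c}}=Q_2\tilde{\bm{a}}$; left-multiplying the first equations by $Q_2^\top$ and using $Q_2^\top T=\bfzero$ gives $(Q_2^\top\Sigma Q_2+n\lambda I)\bm{a}=Q_2^\top\bm{y}$ and $(Q_2^\top\tilde\Sigma Q_2+n\lambda I)\tilde{\bm{a}}=Q_2^\top\bm{y}$, so writing $M,\tilde M$ for these invertible matrices, $\bm{a}=M^{-1}Q_2^\top\bm{y}$ and $\tilde{\bm{a}}=\tilde M^{-1}Q_2^\top\bm{y}$.

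For the $\tilde{\bm{c}}$ bound I would use the resolvent identity $\tilde M^{-1}-M^{-1}=M^{-1}(M-\tilde M)\tilde M^{-1}$ with $M-\tilde M=Q_2^\top(\Sigma-\tilde\Sigma)Q_2$, so that, $Q_2$ having orthonormal columns, $\tilde{\bm{c}}-\bm{c}=Q_2M^{-1}Q_2^\top(\Sigma-\tilde\Sigma)Q_2\tilde M^{-1}Q_2^\top\bm{y}$. Repeated submultiplicativity of the Frobenius norm (keeping $Q_2^\top\bm{y}$ intact) gives $\Vert\tilde{\bm{c}}-\bm{c}\Vert^2\le\Vert Q_2\Vert_F^6\Vert Q_2^\top\bm{y}\Vert^2\,\Vert M^{-1}\Vert_F^2\,\Vert\tilde M^{-1}\Vert_F^2\,\Vert\tilde\Sigma-\Sigma\Vert_F^2$, and since $M,\tilde M$ are $n\lambda I$ plus positive semidefinite matrices with eigenvalues $\lambda_{k,n},\tilde\lambda_{k,n}$, one has $\Vert M^{-1}\Vert_F^2\le\sum_k\lambda_{k,n}^{-2}=B$ and (when $Q_2^\top\tilde\Sigma Q_2$ is nonsingular) $\Vert\tilde M^{-1}\Vert_F^2\le\sum_k\tilde\lambda_{k,n}^{-2}=\tilde B$, yielding $\Vert\tilde{\bm{c}}-\bm{c}\Vert^2\le\zeta_1 B\tilde B\Vert\tilde\Sigma-\Sigma\Vert_F^2$. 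For $\tilde{\bm{d}}$, eliminating the penalized block by $R^{-1}Q_1^\top$ in the first equations and using $Q_1^\top\bm{c}=Q_1^\top\tilde{\bm{c}}=\bfzero$ gives $\bm{d}=R^{-1}Q_1^\top(\bm{y}-\Sigma\bm{c})$ and $\tilde{\bm{d}}=R^{-1}Q_1^\top(\bm{y}-\tilde\Sigma\tilde{\bm{c}})$, hence $\tilde{\bm{d}}-\bm{d}=R^{-1}Q_1^\top\bigl((\Sigma-\tilde\Sigma)\bm{c}+\tilde\Sigma(\bm{c}-\tilde{\bm{c}})\bigr)$; using that the squared operator norm of $R^{-1}$ equals $\lambda_\text{max}\bigl((T^\top T)^{-1}\bigr)=\lambda_\text{max}(A)$, a triangle inequality (the factor $2$), $\Vert(\Sigma-\tilde\Sigma)\bm{c}\Vert\le\Vert\tilde\Sigma-\Sigma\Vert_F\Vert\bm{c}\Vert$, $\Vert\tilde\Sigma(\bm{c}-\tilde{\bm{c}})\Vert\le\Vert\tilde\Sigma\Vert_F\Vert\bm{c}-\tilde{\bm{c}}\Vert$, and the first inequality gives $\Vert\tilde{\bm{d}}-\bm{d}\Vert^2\le2\lambda_\text{max}(A)\bigl(\Vert\bm{c}\Vert^2+\zeta_1 B\tilde B\Vert\tilde\Sigma\Vert_F^2\bigr)\Vert\tilde\Sigma-\Sigma\Vert_F^2=\zeta_2\Vert\tilde\Sigma-\Sigma\Vert_F^2$.

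The main obstacle is the normalization step: showing that the penalized-least-squares (equivalently LME) solution of \eqref{eq:min.target}, whose $\tilde{\bm{c}}$ is determined only modulo $\ker\tilde\Sigma$, admits a representative solving the constrained linear system $T\tilde{\bm{d}}+(\tilde\Sigma+n\lambda I)\tilde{\bm{c}}=\bm{y}$, $T^\top\tilde{\bm{c}}=\bfzero$, which is what legitimizes the term-by-term comparison with \eqref{eq:cd}; everything after that is routine norm bookkeeping. One point to watch is that the bound through $\tilde B$ is informative only for ranks $K$ large enough that $Q_2^\top\tilde\Sigma Q_2$ is nonsingular, since otherwise $\tilde B=\infty$.
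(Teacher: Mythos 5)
Your proposal is correct and follows essentially the same route as the paper's proof: reduce to the $(n-p)$-dimensional systems $G\bm a=Q_2^\top\bm y$ and $\tilde G\tilde{\bm a}=Q_2^\top\bm y$ via the QR decomposition, apply the resolvent identity $\tilde G^{-1}-G^{-1}=\tilde G^{-1}(G-\tilde G)G^{-1}$ with Frobenius submultiplicativity for the $\tilde{\bm c}$ bound, and eliminate through $T^\top$ (equivalently $R^{-1}Q_1^\top$) for the $\tilde{\bm d}$ bound. Your two additional observations are both sound and worth retaining: the paper silently assumes the minimizer of \eqref{eq:min.target} can be normalized to satisfy the linear system \eqref{eq:cd} with $\tilde\Sigma$ (your $\ker\tilde\Sigma$ argument supplies this), and the bound is indeed vacuous ($\tilde B=\infty$) whenever $Q_2^\top\tilde\Sigma Q_2$ is singular, e.g.\ when $K<n-p$.
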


\begin{proof}
The solutions to \eqref{eq:cd} are
	\begin{equation}
	\begin{aligned}
	\bm c&=Q_2(Q_2^\top  (\Sigma+n\lambda I) Q_2)^{-1}Q_2^\top \bm y,\\
	\bm d&=R^{-1}Q_1^\top (\bm y-(\Sigma+n\lambda I)\bm c).
	\end{aligned} \label{eq:cdset}
	\end{equation}
The coefficients $\tilde{\bm c}$ and $\tilde{\bm d}$ 
have similar form as \eqref{eq:cdset} with $\Sigma$ 
being replaced by $\tilde{\Sigma}$. 
Let $G=Q_2^\top (\Sigma+n\lambda I)Q_2$,
$\tilde{G}=Q_2^\top (\tilde{\Sigma}+n\lambda I)Q_2$, and
$FDF^\top$ and $\tilde{F}\tilde{D}\tilde{F}^\top$ be
eigendecompositions of $Q_2^\top \Sigma Q_2$ and 
$Q_2^\top \tilde{\Sigma}Q_2$ respectively where $F$ and 
$\tilde{F}$ are $(n-p)\times(n-p)$ orthogonal matrices, and 
$D=\text{diag}(\lambda_{1,n},\ldots,\lambda_{n-p,n})$ and 
$\tilde{D}=\text{diag}(\tilde\lambda_{1,n},\ldots,\tilde \lambda_{n-p,n})$ 
are diagonal matrices with eigenvalues of 
$Q_2^\top \Sigma Q_2$ and $Q_2^\top \tilde{\Sigma}Q_2$. 
Then
\begin{align}
\Vert \tilde{\bm c}-\bm c \Vert^2
%&= Q_2\tilde{G}^{-1}Q_2^\top \bm y-Q_2G^{-1}Q_2^\top \bm y  \notag\\
%&=Q_2(\tilde{G}^{-1}-G^{-1})Q_2^\top \bm y\notag\\
&=\Vert Q_2[\tilde{G}^{-1}(\tilde{G}-G)G^{-1}]Q_2^\top \bm y \Vert^2 \notag\\
%	&=-[Q_2\tilde{G}^{-1}Q_2^\top ][(\tilde{\Sigma}-\Sigma)+(n\tilde{\lambda}-n\lambda)I][Q_2G^{-1}Q_2^\top ]\bm y \notag \\
&=\Vert Q_2\tilde{F}(\tilde{D}+n\lambda I)^{-1}
(Q_2\tilde{F})^\top(\tilde{\Sigma}-\Sigma)Q_2F(D+n\lambda I)^{-1}
(Q_2F)^\top \bm y \Vert^2 \notag \\
&\leq \Vert Q_2\tilde{F}\Vert_F^2 \Vert (\tilde{D}+n\lambda I)^{-1} \Vert_F^2 \Vert (Q_2\tilde{F})^\top \Vert_F^2 \Vert (\tilde{\Sigma}-\Sigma)\Vert_F^2 \Vert Q_2 F \Vert_F^2 \notag\\
&\quad \cdot \Vert (D+n\lambda I)^{-1} \Vert_F^2 \Vert F^\top Q_2^\top \bm y \Vert^2 \notag\\
&= \Vert (\tilde{\Sigma}-\Sigma) \Vert_F^2\left(\sum_{k=1}^{n-p}(\tilde\lambda_{k,n}+n\lambda)^{-2}\right)\left(\sum_{k=1}^{n-p}(\lambda_{k,n}+n\lambda)^{-2}\right)\Vert Q_2\Vert_F^6 \Vert Q_2^\top \bm y \Vert^2\notag\\
&\leq 
\Vert\tilde{\Sigma}-\Sigma \Vert_F^2B \tilde{B}\Vert Q_2\Vert_F^6 \Vert Q_2^\top \bm y \Vert^2\notag\\
&= \zeta_1 B \tilde{B} \Vert\tilde{\Sigma}-\Sigma \Vert_F^2, \label{bd:diff.c}
\end{align}
where we used the facts that the Frobenius norm of a 
vector equals its Euclidean norm, 
$\Vert Q_2F\Vert_F^2=\text{trace}(Q_2FF^\top Q_2^\top)=
\text{trace}(Q_2Q_2^\top)=\Vert Q_2 \Vert_F^2$,
the first inequality holds because of submultiplicativity 
of the Frobenius norm, and the second inequality holds 
because of the triangle inequality and smoothing 
parameter $\lambda$ is non-negative. 

Multiplying the first equation in \eqref{eq:cd} and the 
corresponding first equation for $\tilde{\bm c}$ and 
$\tilde{\bm d}$ by $T^\top $, and then taking the difference, 
we have
$T^\top T(\tilde{\bm d}-\bm d)+T^\top (\tilde{\Sigma}+
n\lambda I)\tilde{\bm c}-T^\top (\Sigma+n\lambda I)\bm c=0$.
Since $T^\top\bm c =\bm 0$ and $T^\top\tilde{\bm c}=\bm 0$
by the second equation in \eqref{eq:cd}, we have
$\tilde{\bm d}-\bm d=
(T^\top T)^{-1}T^\top (\tilde{\Sigma}\tilde{\bm c}-\Sigma\bm c)$ and
\begin{align*}
\Vert \tilde{\bm d}-\bm d \Vert^2
&=(\tilde{\Sigma}\tilde{\bm c}-\Sigma\bm c)^\top A (\tilde{\Sigma}\tilde{\bm c}-\Sigma\bm c)\notag\\
%	& \leq \lambda_\text{max}(A)\Vert\tilde{\Sigma}\tilde{\bm c}-\Sigma\bm c \Vert^2\\
& \leq \lambda_\text{max}(A)\Vert\tilde{\Sigma}\bm c-\Sigma \bm c + \tilde{\Sigma}\tilde{\bm c}-\tilde{\Sigma}\bm c \Vert^2\\
%	& \leq 2\lambda_\text{max}(A)\left[\Vert(\tilde{\Sigma}-\Sigma)\bm c\Vert^2+\Vert\tilde{\Sigma}(\tilde{\bm c}-\bm c) \Vert^2\right]\\
	& \leq 2\lambda_\text{max}(A)\left[\Vert\bm c\Vert^2\Vert\tilde{\Sigma}-\Sigma\Vert_F^2+\Vert\tilde{\Sigma}\Vert_F^2\Vert\tilde{\bm c}- \bm c \Vert^2\right]\\
	& \leq 2\lambda_\text{max}(A)\left[\Vert\bm c\Vert^2\Vert\tilde{\Sigma}-\Sigma\Vert_F^2+\Vert\tilde{\Sigma}\Vert_F^2\zeta_1B \tilde{B} \Vert\tilde{\Sigma}-\Sigma \Vert_F^2 \right]\\
	&= 2\lambda_\text{max}(A)\left(\zeta_1B \tilde{B} \Vert\tilde{\Sigma}\Vert_F^2+\Vert \bm c\Vert^2\right) \Vert \tilde{\Sigma}-\Sigma\Vert_F^2\\
	&= \zeta_2 \Vert \tilde{\Sigma}-\Sigma\Vert_F^2,
\end{align*}
where the second inequality holds by the Cauchy-Schwarz 
inequality, the third inequality holds because of 
submultiplicativity of the Frobenius norm, and the 
fourth inequality hold because of equation \eqref{bd:diff.c}.
\end{proof}

\begin{proof}[Proof of Theorem \ref{bound:truncation}]
%The function $f_1$ admits the expansion
%$f_1(x)=\sum_{k=1}^{\infty}a_{0k}\Phi_k(x)$
%where $a_{0k}=\langle \Phi_k(x),f\rangle$.
 Write the component $\hat{f}_1$ as
\begin{align*}
\hat{f}_1(x)=\sum_{i=1}^{n}c_i \CL_{i(z)} R_1(x,z) =\sum_{k=1}^{\infty}\left[\delta_k 
\sum_{i=1}^{n}c_i \CL_i \Phi_k\right]\Phi_k(x)
\triangleq \sum_{k=1}^{\infty}a_k\Phi_k(x),
\end{align*}
%$\hat{f}_1(x)=\sum_{i=1}^{n}c_i \CL_{i(z)} R_1(x,z)
%	&=\sum_{i=1}^{n}c_i\left [\sum_{k=1}^{\infty}\delta_k\Phi_k(x_i)\Phi_k(x)\right ]\\
%=\sum_{k=1}^{\infty}\left[\delta_k 
%\sum_{i=1}^{n}c_i \CL_i \Phi_k\right]\Phi_k(x)
%\triangleq \sum_{k=1}^{\infty}a_k\Phi_k(x)$
where $a_k=\delta_k \sum_{i=1}^{n}c_i \CL_i \Phi_k$. 
Then the smoothing spline estimate has the form
$\hat{f}(x)=\sum_{\nu=1}^{p}d_\nu\phi_\nu(x)+
\sum_{k=1}^{\infty}a_k\Phi_k(x)$.
Similarly, the low-rank approximation $\tilde{f}$ 
based on $\tilde{\Sigma}$ can be represented as
$\tilde{f}(x)=\sum_{\nu=1}^p \tilde{d}_\nu\phi_\nu(x)
+\sum_{k=1}^{K}\tilde{a}_k\Phi_k(x)$
where 
$\tilde{a}_k=\delta_k \sum_{i=1}^{n}\tilde{c}_i \CL_i \Phi_k$.		
Since 
$\Vert \tilde{f}_0-\hat{f}_0 \Vert^2_{2}=
\int_{\mathcal{X}} (\sum_{\nu=1}^{p}\tilde{d}_\nu\phi_\nu(x)
-\sum_{\nu=1}^{p}d_\nu\phi_\nu(x))^2 dx
=\Vert\tilde{\bm d}-\bm d\Vert^2$,
then we have the upper bound for $\Vert \tilde{f}_0-\hat{f}_0 \Vert^2_{2}$
by Lemma \autoref{bound:coef.tilde}. For the approximation
error $\left  \Vert \tilde{f}_1-\hat{f}_1 \right\Vert^2_{2}$,
we have
\begin{align}
\left  \Vert \tilde{f}_1-\hat{f}_1 \right\Vert^2_{2}
&=\int_{\mathcal{X}}\left[\sum_{k=1}^{K}\tilde{a}_k\Phi_k(x)-\sum_{k=1}^{\infty}a_k\Phi_k(x)\right]^2 dx\notag\\
&=\sum_{k=1}^{K}\left(\tilde{a}_k-a_k\right)^2+\sum_{k=K+1}^{\infty}a_k^2\notag\\
&=\sum_{k=1}^{K}\delta_k^2\left(\sum_{i=1}^n(\tilde{c}_i-c_i)\CL_i \Phi_k\right)^2+\sum_{k=K+1}^{\infty}\delta_k^2\left(\sum_{i=1}^nc_i\CL_i \Phi_k\right)^2\notag\\
%	&\leq \sum_{k=1}^{K}\left[\delta_k^2\left(\sum_{i=1}^{n}(\tilde{c}_i-c_i)^2\right)\left(\sum_{i=1}^{n}\Phi_k^2(x_i)\right)\right]+\sum_{k=K+1}^{\infty}\delta_k^2\left(\sum_{i=1}^{n}c_i^2\right)\left(\sum_{i=1}^{n}\Phi_k^2(x_i)\right)\notag\\
&\leq \Vert \tilde{\bm c}-\bm c\Vert^2 
\left(\sum_{i=1}^{n}\sum_{k=1}^{K}\delta_k^2 (\CL_i \Phi_k)^2\right)
+\Vert \bm c \Vert^2\left(\sum_{k=K+1}^{\infty}\sum_{i=1}^{n}
\delta_k^2 (\CL_i \Phi_k)^2 \right) \notag \\
&\leq n\kappa^2 C_K\zeta_1B \tilde{B} \Vert\tilde{\Sigma}-\Sigma \Vert_F^2+n\kappa^2\Vert \bm c\Vert^2D_K\notag\\
%	&= \kappa^2 C_K\zeta_1n^{-1}(\tilde{\lambda}\lambda)^{-2}\Vert\tilde{\Sigma}-\Sigma \Vert_F^2+n^2\kappa^2 C_K\zeta_1(\lambda^{-1}-\tilde{\lambda}^{-1})^2+n\kappa^2\Vert \bm c\Vert^2D_K\notag\\
&= \zeta_3 \Vert\tilde{\Sigma}-\Sigma \Vert_F^2+n\kappa^2\Vert \bm c\Vert^2D_K.
\end{align}
Finally, using the fact that 
$\Vert \tilde{f}-\hat{f}\Vert_{2}^2
\leq 2\Vert \tilde{f}_0-\hat{f}_0\Vert_{2}+
2\Vert \tilde{f}_1-\hat{f}_1\Vert_{2}$, we have the upper 
bound for the overall function.  
\end{proof}

\section{Proof of Theorem \ref{bound:eigen}}
\label{appendix:theorem2}

Following similar arguments in the proof of Lemma 
\ref{bound:coef.tilde}, it can be shown that 
\begin{align*}
\Vert \check{\bm c}-\tilde{\bm c} \Vert^2 
&\leq \zeta_1 \tilde{B} \check{B}
\Vert\check{\Sigma}-\tilde{\Sigma} \Vert_F^2,\\
\Vert \check{\bm d}-\tilde{\bm d} \Vert^2 
&\leq \zeta_2' \Vert\check{\Sigma}-\tilde{\Sigma}\Vert_F^2.
\end{align*}
Furthermore, 
$\check{f}(x)=\check{f}_0(x)+\check{f}_1(x)=
\sum_{\nu=1}^{p}\check{d}_\nu\phi_\nu(x)+
\sum_{k=1}^K \check{a}_k \check{\Phi}_k(x)$
where 
$\check{a}_k=\check{\delta}_k\sum_{i=1}^{n} \check{c}_i 
\CL_i \check{\Phi}_k$. 
The upper bound for $\Vert \check{f}_0-\tilde{f}_0\Vert_{2}^2$
can be derived similarly as in the proof of Theorem 
\ref{bound:truncation}. We now derive the upper bound for  
$\Vert \check{f}_1-\tilde{f}_1\Vert_{2}^2$.

\begin{align}
\Vert \check{f}_1-\tilde{f}_1\Vert_{2}^2
&=\left \Vert \sum_{k=1}^{K}\check{a}_k\check{\Phi}_k(x)-
\sum_{k=1}^{K}\tilde{a}_k\Phi_k(x) \right \Vert_{2}^2\notag \\
%	&=\left \Vert\sum_{k=1}^{K}\check{a}_k\check{\Phi}_k(x) -\sum_{k=1}^{K}\check{a}_k\Phi_k(x)+ \sum_{k=1}^{K}\check{a}_k\Phi_k(x)-\sum_{k=1}^{K}\tilde{a}_k\Phi_k(x)\right \Vert_{2}^2\notag\\
&\leq 2\left\Vert \sum_{k=1}^{K}\check{a}_k\check{\Phi}_k(x)-\sum_{k=1}^{K}\check{a}_k\Phi_k(x)\right \Vert_{2}^2 + 2\left \Vert\sum_{k=1}^{K}\check{a}_k\Phi_k(x)- \sum_{k=1}^{K}\tilde{a}_k\Phi_k(x)\right \Vert_{2}^2  \notag\\
&=2 \left \Vert \sum_{k=1}^{K}\check{a}_k
\left(\check{\Phi}_k(x)-\Phi_k(x)\right)\right \Vert_{2}^2
+ 2\sum_{k=1}^{K}(\check{a}_k-\tilde{a}_k)^2 \notag \notag \\
&= 2(I + II) . \notag 
\end{align}

Moreover, 
\begin{align}
I &=\int_{\mcal{X}}\left(\sum_{k=1}^{K}\check{a}_k\left(\check{\Phi}_k(x)-\Phi_k(x)\right)\right)^2dx \notag\\
%	&\leq\int_{\mcal{X}}\left(\sum_{k=1}^{K}\check{a}_k^2\right)\left(\sum_{k=1}^K\left(\check{\Phi}_k(x)-\Phi_k(x)\right)^2\right)dx\notag\\
& \leq \left(\sum_{k=1}^{K}\check{a}_k^2\right) 
\left(\sum_{k=1}^K\left \Vert \check{\Phi}_k(x)-\Phi_k(x) \right \Vert_{2}^2 \right)\notag\\
&=\sum_{k=1}^{K}\check{\delta}_k^2
\left(\sum_{i=1}^n\check{c}_i \CL_i \check{\Phi}_k \right)^2 
\left(\sum_{k=1}^K\left \Vert \check{\Phi}_k(x)-\Phi_k(x) \right \Vert_{2}^2 \right)\notag\\
&\leq \sum_{k=1}^K\check{\delta}_k^2\left(\sum_{i=1}^n\check{c}_i^2\right)\left(\sum_{i=1}^n \CL_i \check{\Phi}_k^2 \right)\left(\sum_{k=1}^K\left \Vert \check{\Phi}_k(x)-\Phi_k(x) \right \Vert_{2}^2 \right)\notag\\
%&=\Vert \check{\bm c}\Vert^2\left(\sum_{k=1}^K\check{\delta}_k^2\sum_{i=1}^n\check{\Phi}_k^2(x_i)\right) \left(\sum_{k=1}^K\left \Vert \check{\Phi}_k(x)-\Phi_k(x) \right \Vert_{2}^2 \right)\notag\\
&\leq \Vert \check{\bm c}\Vert^2 n \kappa'^2\left(\sum_{k=1}^K\check{\delta}_k^2\right) \left(\sum_{k=1}^K\left \Vert \check{\Phi}_k(x)-\Phi_k(x) \right \Vert_{2}^2 \right)\notag \\
%&= \Vert \check{\bm c}\Vert^2 n \kappa'^2C_K' \left(\sum_{k=1}^K\left \Vert \check{\Phi}_k(x)-\Phi_k(x) \right \Vert_{2}^2 \right)\notag \\
&= \zeta_4 \sum_{k=1}^K\left \Vert \check{\Phi}_k(x)-\Phi_k(x) \right \Vert_{2}^2. \label{eq:part3.bd}
\end{align}
The first and second inequalities hold by the 
Cauchy-Schwarz inequality, and the third equality holds 
because of the boundness assumption of $\CL_i \check{\Phi}_k$.

\begin{align*}
II&=\sum_{k=1}^K\left(\check{\delta}_k\sum_{i=1}^n\check{c}_i\CL_i \check{\Phi}_k-\delta_k\sum_{i=1}^n\tilde{c}_i\CL_i \Phi_k\right)^2\\
%	&=\sum_{k=1}^K\left(  \check{\delta}_k\sum_{i=1}^n\check{c}_i\CL_i \check{\Phi}_k -\check{\delta}_k\sum_{i=1}^n\check{c}_i\CL_i \Phi_k +\check{\delta}_k\sum_{i=1}^n\check{c}_i\CL_i \Phi_k-\delta_k\sum_{i=1}^n\check{c}_i\CL_i \Phi_k\right.
	\\
%	&\qquad \qquad \left. + \delta_k\sum_{i=1}^n\check{c}_i\CL_i \Phi_k  -\delta_k\sum_{i=1}^n\tilde{c}_i\CL_i \Phi_k\right)^2\\
&\leq 3 \sum_{k=1}^K \left(\check{\delta}_k\sum_{i=1}^n\check{c}_i
\CL_i \check{\Phi}_k-\check{\delta}_k\sum_{i=1}^n\check{c}_i\CL_i \Phi_k\right)^2
+3\sum_{k=1}^K\left(\check{\delta}_k\sum_{i=1}^n\check{c}_i\CL_i \Phi_k-\delta_k\sum_{i=1}^n\check{c}_i\CL_i \Phi_k\right)^2\\
&\qquad + 3 \sum_{k=1}^K\left(\delta_k\sum_{i=1}^n\check{c}_i\CL_i \Phi_k-\delta_k\sum_{i=1}^n\tilde{c}_i\CL_i \Phi_k\right)^2 \\
& \leq 3\Vert\check{\bm c}\Vert^2\sum_{k=1}^K\left[\check{\delta}_k^2\sum_{i=1}^n(\CL_i \check{\Phi}_k-\CL_i \Phi_k)^2\right] +
3n\kappa^2 \Vert \check{\bm c}\Vert^2\sum_{k=1}^K(\check{\delta}_k-\delta_k)^2
+ 3n\kappa^2C_K\Vert \check{\bm c}-\tilde{\bm c}\Vert^2 ,
\end{align*}
where the first inequality holds by the Cauchy-Schwartz inequality.
Combining $I$, $II$ and upper bound for 
$\Vert \check{\bm c}-\tilde{\bm c}\Vert^2$ we have the upper 
bound for $\Vert \check{f}_1-\tilde{f}_1\Vert_{2}^2$.

Finally, using the fact that 
$\Vert \check{f}-\tilde{f}\Vert_{2}^2
\leq 2\Vert \check{f}_0-\tilde{f}_0\Vert_{2}+
2\Vert \check{f}_1-\tilde{f}_1\Vert_{2}$, we have the upper 
bound for the overall function.  

\end{appendix}
\end{document}